\documentclass{article}

% if you need to pass options to natbib, use, e.g.:
\PassOptionsToPackage{numbers, compress}{natbib}
% before loading neurips_2019

% ready for submission
% \usepackage{neurips_2019}

% to compile a preprint version, e.g., for submission to arXiv, add add the
% [preprint] option:
%     \usepackage[preprint]{neurips_2020}

% to compile a camera-ready version, add the [final] option, e.g.:
\usepackage[preprint]{neurips_2020}

% to avoid loading the natbib package, add option nonatbib:
%     \usepackage[nonatbib]{neurips_2020}

\usepackage{microtype}
\usepackage{graphicx}
\usepackage{subfigure}
\usepackage{booktabs} % for professional tables
\usepackage{tablefootnote}

% hyperref makes hyperlinks in the resulting PDF.
% If your build breaks (sometimes temporarily if a hyperlink spans a page)
% please comment out the following usepackage line and replace
% \usepackage{icml2020} with \usepackage[nohyperref]{icml2020} above.

% Attempt to make hyperref and algorithmic work together better:

% Use the following line for the initial blind version submitted for review:

% Our packages
\usepackage{hyperref}
\usepackage{url}
\usepackage{booktabs}       % professional-quality tables
\usepackage{amsfonts}       % blackboard math symbols
\usepackage{nicefrac}       % compact symbols for 1/2, etc.
\usepackage{microtype}      % microtypography
\usepackage{makecell}

\usepackage{algorithm}
\usepackage{algpseudocode}
\usepackage{subfigure}
\usepackage{hyperref}
\usepackage{relsize}

\usepackage{amsmath,amssymb,amsfonts,amsthm,mathtools,bm}
\usepackage{enumitem}
\usepackage{multirow}
\usepackage{bm}
\usepackage{array}
\usepackage{color}
\usepackage{xcolor}
\usepackage{wrapfig}
\usepackage{multicol}
\usepackage{caption}
\usepackage{diagbox}
\usepackage{pifont}
\usepackage{pdfpages}

\usepackage{macro}

\title{A General Family of Stochastic Proximal Gradient Methods for Deep Learning}
% The \author macro works with any number of authors. There are two commands
% used to separate the names and addresses of multiple authors: \And and \AND.
%
% Using \And between authors leaves it to LaTeX to determine where to break the
% lines. Using \AND forces a line break at that point. So, if LaTeX puts 3 of 4
% authors names on the first line, and the last on the second line, try using
% \AND instead of \And before the third author name.

\author{%
  Jihun Yun \\
  KAIST \\
  \texttt{arcprime@kaist.ac.kr} \\
  \And
  Aur\'elie C. Lozano \\
  IBM T.J. Watson Research Center \\
  \texttt{aclozano@us.ibm.com} \\
  \And
  Eunho Yang \\
  KAIST, AITRICS \\
  \texttt{eunhoy@kaist.ac.kr}
  % examples of more authors
  % \And
  % Coauthor \\
  % Affiliation \\
  % Address \\
  % \texttt{email} \\
  % \AND
  % Coauthor \\
  % Affiliation \\
  % Address \\
  % \texttt{email} \\
  % \And
  % Coauthor \\
  % Affiliation \\
  % Address \\
  % \texttt{email} \\
  % \And
  % Coauthor \\
  % Affiliation \\
  % Address \\
  % \texttt{email} \\
}

\begin{document}
\maketitle

\begin{abstract}
	We study the training of regularized neural networks where the regularizer can be non-smooth and non-convex. We propose
	a unified framework for stochastic proximal gradient descent, which we term \textsc{ProxGen}, that allows for arbitrary positive preconditioners and lower semi-continuous regularizers. Our framework encompasses standard stochastic proximal gradient methods \emph{without} preconditioners as special cases, which have been extensively studied in various settings.
	Not only that, we present two important update rules beyond the well-known standard methods as a byproduct of our approach: (i) the first closed-form proximal mappings of $\ell_q$ regularization $(0 \leq q \leq 1)$ for \emph{adaptive} stochastic gradient methods, and (ii) a revised version of \textsc{ProxQuant} \cite{bai2019proxquant} %[Bai et al, 2019]
	that fixes a caveat of the original approach for quantization-specific regularizers.
	We analyze the convergence of \textsc{ProxGen} and show that the whole family of \textsc{ProxGen} enjoys the same convergence rate as stochastic proximal gradient descent without preconditioners.
	We also empirically show the superiority of proximal methods compared to subgradient-based approaches via extensive experiments. Interestingly, our results indicate that proximal methods with non-convex regularizers are more effective than those with convex regularizers.
\end{abstract}

\section{Introduction}

We study the regularized training of neural networks, which can be formulated as the following (stochastic) optimization problem
\begin{align}\label{eqn:problem}
\minimize\limits_{\theta \in \Omega}~ F(\theta) \coloneqq \mathbb{E}_{\xi \sim \mathbb{P}}\big[f(\theta; \xi)\big] + \mathcal{R}(\theta)
\end{align}
where $\theta \in \mathbb{R}^p$ represents the network parameter vector, $\xi$ is the random variable corresponding to mini-batch data samples, and $\mathcal{R}(\cdot)$ is a regularizer encouraging low-dimensional structural constraints on the parameter vector such as sparsity or low-rankness. 

For the \emph{unregularized} case, i.e., when $\mathcal{R}(\theta)=0$, stochastic gradient descent (SGD) has been a prevalent approach to solve the optimization problem \eqref{eqn:problem}. At each iteration, SGD evaluates the gradient only on a randomly chosen subset of training samples (mini-batch). Vanilla SGD employs a uniform learning rate for all coordinates, and several adaptive variants have been proposed, which scale the learning rate for each coordinate by its gradient history. A prime example of such approaches is \textsc{AdaGrad} \cite{duchi11}, which adjusts the learning rate by the sum of all the past squared gradients. However, the performance of \textsc{AdaGrad} degrades in non-convex dense settings as the learning rates vanish too rapidly. To resolve this issue, exponential moving average (EMA) approaches such as \textsc{RMSprop} \cite{tieleman2012lecture} and \textsc{Adam}~\cite{kingma15} have been proposed and become popular. These scale down the gradients by square roots of exponential moving averages of squared past gradients to essentially limit the scope of the adaptation to only a few recent gradients. In terms of theory, convergence analyses of these unregularized SGD, whether adaptive or not, have been well studied both for convex \cite{kingma15,j.2018on} and non-convex \cite{chen2019convergence,lei2019stochastic} loss $f$ cases.

The technique of regularization is ubiquitous in machine learning as it can effectively prevent overfitting and yield better generalization. The $\ell_1$-regularized training for Lasso estimators/sparse Gaussian graphical model (GMRF) estimation \cite{Tibshirani96,RWRY11} and $\ell_2$ weight decay \cite{Tikhonov43} on parameters are prototypical examples. In the context of deep learning, important instances include network pruning~\cite{wen2016learning,louizos2018learning}, which induces a sparse network structure, and network quantization \cite{yang2019quantization,courbariaux2015binaryconnect,bai2019proxquant}, which gives hard constraints so that parameters have only discrete values.

%\textcolor{blue}{(Second paragraph) Introduce the background for SGD.} For training machine learning models, the stochastic gradient descent (SGD) is a prevalent approach to solve the optimization problem \eqref{eqn:problem}, which evaluates the gradient only on subset of training samples at each iteration. Beyond the spirit of uniform learning rate for all coordinates, there have been several studies on \emph{adaptive} gradient methods which scale the learning rate for each coordinate by its gradient history. For this family of optimizers, \cite{duchi11} propose \textsc{AdaGrad}, which adjusts the learning rate by the sum of all the past squared gradients. However, the learning rates vanish too rapidly by summing up all gradient history. To resolve this issue, the exponential moving average (EMA) approach becomes spotlighted, and \textsc{RMSprop}/\textsc{Adam}, which we practically use a lot, falls into this class.

In many cases, the regularizer is non-smooth around some region (Consider $\ell_1$ norm at zero). Therefore, instead of using the gradient, one employs the subgradient of the objective function $F(\theta)$ in \eqref{eqn:problem}. Such a strategy, which is essentially adopted in modern machine learning libraries such as TensorFlow \cite{abadi2016tensorflow} and PyTorch \cite{paszke2019pytorch}, is problematic as it  may slow down convergence and result in oscillations. A simple idea to tackle this issue is to bypass the non-smoothness of a regularizer via its proximal operator. This idea is the basis of proximal gradient descent (PGD) methods, which first update the parameter using the gradient of the loss function $f(\theta)$ and then perform a proximal mapping of $\mathcal{R}(\theta)$. 
In the \emph{non-stochastic} case, the PGD with both convex and non-convex regularizers has been extensively studied in the literature \cite{reddi2016proximal,allen2017natasha,wang2018spiderboost,pham2019proxsarah,chen2020orthant}. %\textcolor{blue}{TODO (Jihun): citation with a little explanation.}
%\textcolor{magenta}{Aurelie: How about the following work : \url{https://arxiv.org/pdf/2004.03639.pdf}?} \textcolor{blue}{Jihun: Thanks for your note!} %\textcolor{blue}{Jihun: Here, I will add some references. Do we need to explain previous work (but less relevant to our work) in more details?}(\textcolor{red}{Aurelie: I think it's important to provide references but we no need to discuss in details as we want the reader to focus more on what we do and closely related work. Do you agree?}. 
In contrast, PGD in the \emph{stochastic} setting has been little explored. \cite{duchi11,ghadimi2016mini} consider PGD to solve the stochastic objectives with convex regularizers. Recently, \cite{xu2019stochastic} studies non-convex and non-smooth regularized problems for DC (difference of convex) functions and \cite{xu2019non} presents a non-asymptotic analysis for non-convex smooth loss and non-convex regularizers, which is the most general setting.

%\textcolor{magenta}{[Jihun: To my knowledge, there are some work such as \cite{ghadimi2016mini}. This work also study the stochastic preconditioned SGD with ``convex regularizer'' only in terms of theory. However, they only consider the vanilla SGD for very simple cases.]}\textcolor{blue}{[Aurelie] Thanks! How about saying: ``In contrast, the use of PGD in the \emph{stochastic} setting has been little explored.'' Then we can quote the convex ghadimi work also followed by the non-convex works. Otherwise, it's odd, because we do not talk about the stochastic convex case at all.}
%In contrast, in the online setting or the finite-sum setting \textcolor{red}{Question:finite sum means f = sum i =1..n fi, while online means we get one sample at a time or one batch at a time, so these settings are not directly comparable? right?} \textcolor{blue}{Jihun: Yes, you are right. These two settings are not directly comparable.}, the use of PGD in the stochastic setting with non-convex regularizers has been very little explored. \textcolor{red}{Thanks! Which one is best? (1) ''In the online setting or in the finite sum setting, the use of PGD in the stochastic setting with non-convex regularizers has been very little explored'' or (2) '' In the online finite sum setting'' or (3) '' In contrast, the use of PGD in the stochastic setting with non-convex regularizers has been very little explored.} \textcolor{blue}{Jihun: Last one seems to be most reasonable!} 

\begin{table}[t]
	\caption{Comparison among stochastic (or online) PGD for solving the problem \eqref{eqn:problem}.}
	\begin{center}
		\small{
			\begin{tabular}{l||l|l|l|l|l}
				\toprule
				Algorithm & \makecell{Non-convex~\\Loss} & \makecell{Non-convex~\\Regularizer} & Preconditioner & Momentum & \makecell{Convergence~\\Guarantee} \\
				\cmidrule{1-6}\morecmidrules\cmidrule{1-6}
				%\midrule
				\textsc{AdaGrad} \cite{duchi11} & & & \textsc{AdaGrad} & & \cmark \\
				\cite{wang2018spiderboost,pham2019proxsarah,davis2019stochastic} & \cmark & & & \cmark & \cmark \\
				\cite{ghadimi2016mini} & \cmark & & \cmark & & \cmark \\
				\cite{xu2019stochastic} & \cmark & \cmark & \textsc{AdaGrad} &  & \cmark \\
				\cite{xu2019non} & \cmark & \cmark &  &  & \cmark \\
				Prox-SGD \cite{Yang2020ProxSGD} & \cmark &  & \cmark & \cmark &  \\
				\midrule
				\textsc{ProxGen} \textbf{(Ours)} & \cmark & \cmark & \cmark & \cmark & \cmark \\
				\bottomrule
			\end{tabular}
		}
	\end{center}
	\label{tab:proxopt_comparison}
\end{table}

All the aforementioned studies, however, focus either on limited settings (e.g. \cite{duchi11} only covers the update rule of \textsc{AdaGrad}) with convex regularizers only, or on pure vanilla gradient descent for non-convex regularizers. Hence, they cannot accommodate all advanced modern optimization algorithms with \emph{preconditioners}, such as adaptive gradient methods. The only exception is \textsc{Prox-SGD} \cite{Yang2020ProxSGD}, with the caveat that \textsc{Prox-SGD} update rule is \emph{not} exact proximal gradient descent. Moreover, the theoretical analysis in~\cite{Yang2020ProxSGD} only guarantees convergence, \emph{not how fast} Prox-SGD converges to stationary points, and furthermore this analysis is performed without considering the preconditioners. Table \ref{tab:proxopt_comparison} summarizes the previous studies and our work in terms of stochastic PGD.

In this paper, we propose an exact framework for stochastic proximal gradient methods with \emph{arbitrary} positive preconditioners and lower semi-continuous (possibly non-convex) regularizers. With our framework, our goal is to provide theoretical and empirical understanding of stochastic proximal gradient methods. Our main contributions are summarized as follows:
\begin{itemize}[leftmargin=4mm]
	\item We propose the first general family of stochastic proximal gradient methods, which we term \textsc{ProxGen}. With \textsc{ProxGen}, we present two novel update rules: (i) the closed-form proximal mappings of $\ell_q$ regularization $(0 \leq q \leq 1)$ for adaptive gradient methods such as \textsc{Adam}, and (ii) revised \textsc{ProxQuant} \cite{bai2019proxquant} which fixes a caveat of the original approach for quantization-specific regularizers.
	
	\item We analyze the convergence of the general \textsc{ProxGen} family and identify essential conditions for convergence guarantee. %which conditions are essential for the guarantee.
	We show that \textsc{ProxGen} enjoys the same convergence rate as vanilla \textsc{Sgd} under mild conditions. Moreover, our analysis encompasses several existing approaches as special cases.
	
	\item In terms of practice, we demonstrate the superiority of proximal methods over subgradient-based methods. Interestingly, our experiments show that proximal methods with non-convex regularizers are more effective than with convex regularizers for learning sparse deep models. 
	
	%\textcolor{blue}{Jihun: In Figure \ref{fig:lq_reg_vgg} or \ref{fig:lq_reg_resnet}, the differences between ProxGen and subgradient methods are more significant for non-convex regularizers $\ell_{1/2}$ and $\ell_{2/3}$ than convex one. I think that this difference comes from the derivative of the $\ell_q$ (where $0 < q < 1$). The derivative of $|\theta|^q$ is $q|\theta|^{q-1} = \frac{q}{|\theta|^{1-q}}$ which would be very large for very small value of $\theta$ while the derivative of $|\theta|$ is just the sign value which is independent of how much $\theta$ is small. Therefore, the large gradient of $|\theta|^{q}$ makes the convergence slower and results in poor performance due to severe oscillations. What do you think? This content will be moved to experiment section.}\textcolor{magenta}{Aurelie: This makes a lot of sense. Thanks for the interesting insight!}
\end{itemize}

\section{A Unified Framework for Stochastic Proximal Gradient Methods}\label{sec:main}

\begin{algorithm}[t]
	\caption{\textsc{ProxGen}: A \textbf{Gen}eral Stochastic \textbf{Prox}imal Gradient Method}
	\label{alg:proxgen_alg}{
		\begin{algorithmic}[1]
			\State {\bfseries Input:} Stepsize $\alpha_t$, $\{\rho_t\}_{t=1}^{t=T} \in [0, 1)$, regularization parameter $\lambda$, and small constant $0 < \delta <\!\!\!< 1$.
			\State {\bfseries Initialize:} $\theta_1 \in \mathbb{R}^{d}$, $m_0 = 0$, and $C_0 = 0$.
			\For{$t = 1, 2, \ldots, T$}
			\State{Draw a minibatch sample $\xi_t$ from $\mathbb{P}$}
			\Let{$g_t$}{$\nabla f(\theta_{t}; \xi_t)$} \Comment{Stochastic gradient at time $t$} 
			\Let{$m_t$}{$\rho_t m_{t-1} + (1 - \rho_t) g_t$} \Comment{First-order momentum estimate}\vspace{+0.1cm}
			\Let{$C_t$}{Preconditioner construction}
			\State{$\theta_{t+1} \in \argmin\limits_{\theta \in \Omega}\Big\{\langle m_t, \theta \rangle + \lambda \mathcal{R}(\theta) + \dfrac{1}{2\alpha_t}(\theta - \theta_{t})^\mathsf{T} \big(C_t + \delta I) (\theta - \theta_{t})\Big\}$}
			\EndFor
			\State {\bfseries Output:} $\theta_{T+1}$
	\end{algorithmic}}
\end{algorithm}

In this section, we present \textsc{ProxGen}, a general family of stochastic proximal gradient methods, and present both existing and novel instances as showcase examples in our family. Algorithm \ref{alg:proxgen_alg} describes the details of \textsc{ProxGen}. The update rule on line 8 of Algorithm \ref{alg:proxgen_alg} can be written more compactly:
\begin{align}
\theta_{t+1} & \in \argmin\limits_{\theta \in \Omega} \Big\{\langle m_t, \theta \rangle + \lambda \mathcal{R}(\theta) + \frac{1}{2\alpha_t} (\theta - \theta_t)^\mathsf{T} \Big(C_t + \delta I\Big) (\theta - \theta_t)\Big\} \nonumber\\
& = \mathrm{prox}_{\alpha_t \lambda \mathcal{R}(\cdot)}^{C_t + \delta I} \Big(\theta_t - \alpha_t (C_t + \delta I)^{-1} m_t\Big) \label{eqn:proxgen_update}
\end{align}

where the proximal operator in \eqref{eqn:proxgen_update} is defined as $\mathrm{prox}_h^{A}(z) = \argmin_x \{h(x) + \frac{1}{2} \|x - z\|_A^2\}$. In \textsc{ProxGen}, we allow both the loss and the regularizer to be non-convex. Based on our framework, we introduce possible examples according to the proper combinations of preconditioners $C_t$ and regularizers $\mathcal{R}(\cdot)$.

\paragraph{Existing Examples.} We first briefly recover some known examples in the \textsc{ProxGen} family.
\begin{itemize}[leftmargin=7mm]
	\item \textsc{AdaGrad} \cite{duchi11} is the first key instance of adaptive gradient methods where $C_t = \Big(\sum_{\tau=1}^{t} g_\tau g_\tau^\mathsf{T}\Big)^{1/2}$ and $\mathcal{R}(\theta) = \|\theta\|_1$. Any convex regularizer $\mathcal{R}(\cdot)$ is allowed. 
	\item The proximal Newton methods \cite{lee2012proximal} employ the exact Hessian preconditioner $C_t = \nabla^2 f(\theta_t)$ and $\mathcal{R}(\theta) = \|\theta\|_1$. In addition, we could replace the exact Hessian with suitable approximations, which yield proximal Newton-\emph{type} methods such as %For example, we could obtain various proximal Newton-type methods via 
	quasi-Newton approximation \cite{becker2019quasi}, L-BFGS approximation \cite{liu1989limited}, and damping strategy adding a multiple of the identity to the Hessian.
\end{itemize}
%\textcolor{blue}{Aurelie Question: Do we really need to criticize the approaches here?} \textcolor{magenta}{Jihun: Do you think it will be better to remove them?}\textcolor{blue}{Aurelie:It's great to keep the approaches, but I wonder whether it is important to criticize them as these are members of our family, and our goal it to provide a general framework, and not especially to improve issues in adgrad and newton type methods. Right? So I would only remove the sentences ''Although the above examples.....considerable cost for computing (approximate) inverse Hessian matrices in high-dimensions.'' What do you think?} \textcolor{magenta}{Jihun: Oh, I got your point! I will re-write this part in different sense.} 
Although the above examples enjoy good theoretical properties in convex settings, many of the modern practical optimization problems involve non-convex loss functions such as learning deep models. Moreover, it is known that non-convex regularizers yield better performance (also in terms of theory) than convex penalties in some applications (see \cite{fu1998penalized,park2011bridge,yang2017sparse+,yun2019trimming} and references therein). Considering this motivation and recent advanced optimizers, we arrive at the following new examples. %in some cases, they encounter practical issues for learning deep models. As \textsc{AdaGrad} uses the squared sum of all past gradients as a preconditioner, the adapted stepsize might be attenuated too rapidly. Newton-type methods require a considerable cost for computing (approximate) inverse Hessian matrices in high-dimensions.} %parameter space.

\paragraph{Novel Examples.} Beyond the well-known methods above, \textsc{ProxGen} naturally introduces proximal versions of standard SGD techniques developed for solving unregularized problems for deep learning. The following examples are just a few instances that have not been explored so far, and \textsc{ProxGen} can cover a broader range of new examples depending on the combinations of preconditioners and regularizers.
\begin{itemize}[leftmargin=7mm]
	\item The \emph{proximal} version of \textsc{Adam} \cite{kingma15} with $\ell_q$ regularization is a possible example where $C_t = \sqrt{\beta C_{t-1} + (1 - \beta) g_t^2}$ with $\beta \in [0,1)$ and $\mathcal{R}(\theta) = \|\theta\|_q$ for $0 \leq q \leq 1$. We mainly validate the superiority of our novel \emph{proximal version} of \textsc{Adam} to the usual subgradient-based counterpart empirically in Section \ref{sec:exp}.
	\item We can also consider the \emph{proximal} version of \textsc{KFAC} \cite{martens2015optimizing} for neural networks. For an $L$-layer neural network, KFAC approximates the Fisher information matrix with layer-wise block diagonal structure where $l$-th diagonal block $C_{t,[l]}$ (for $1 \leq l \leq L$) corresponds to Kronecker-factored approximate Fisher matrix with respect to the parameters at $l$-th layer. The proximal version of K-FAC, which corresponds to $C_{t,[l]} = \mathbb{E}[\bm \delta_l \bm \delta_l^\mathsf{T}] \otimes \mathbb{E}[\bm a_{l-1} \bm a_{l-1}^\mathsf{T}]$ and $\mathcal{R}(\theta) = \|\theta\|_q$ where $\bm \delta_l$ is the gradient with respect to the output of $l$-th layer and $\bm a_{l-1}$ is the activation of $(l-1)$-th layer, could be another example. %Briefly, K-FAC approximates the Fisher information matrix into block diagonal structure, and each diagonal block corresponds to approximated Fisher with respect to the parameters at $i$-th layer. In details, let $\bm a_i$ and $\bm s_i$ denote the activation and output of $i$-th layer respectively. \textcolor{magenta}{TODO (Jihun): concrete equations for $C_t$.}
\end{itemize}

\paragraph{Relationship with \textsc{Prox-SGD} \cite{Yang2020ProxSGD}.} Proximal updates for stochastic preconditioned gradient methods have not been studied previously. \textsc{Prox-SGD} \cite{Yang2020ProxSGD} is the closest work. However, \textsc{Prox-SGD} is not an exact proximal approach and is significantly different from our \textsc{ProxGen} approach. Unlike \textsc{ProxGen} updating parameters by directly solving the quadratic subproblem \eqref{eqn:proxgen_update}, \textsc{Prox-SGD} updates the parameters in two stages: (i) solving the quadratic subproblem \emph{without} learning rate, %\eqref{eqn:proxsgd_subproblem}
then (ii) updating the parameters with the computed direction (i.e. $\widehat \theta_t - \theta_t$) by the learning rate $\alpha_t$ \eqref{eqn:proxsgd_update}.
\begin{align}
\widehat \theta_t & = \argmin\limits_{\theta \in \Omega} \Big\{\!\langle m_t, \theta \rangle + \lambda \mathcal{R}(\theta) + \frac{1}{2} (\theta - \theta_t)^\mathsf{T} \Big(C_t + \delta I\Big) (\theta - \theta_t)\!\Big\},~ %\label{eqn:proxsgd_subproblem} \\
\theta_{t+1} = \theta_t + \alpha_t (\widehat \theta_t - \theta_t) \label{eqn:proxsgd_update}
\end{align}
We also note that the two-stage update scheme of \textsc{Prox-SGD} might have some potential issues. For example, for $\ell_1$-regularized problems, the updated parameter $\theta_{t+1}$ in \eqref{eqn:proxsgd_update} \emph{might not achieve exact zero} (while $\widehat \theta_t$ can) whereas $\theta_{t+1}$ for \textsc{ProxGen} in \eqref{eqn:proxgen_update} can attain exact zero value according to the update rule \eqref{eqn:proxgenl1} in Section \ref{sec:examples}. Another potential caveat is that \textsc{Prox-SGD} might \emph{overestimate} the sparsity level. We provide details on this in Appendix, 
with an experiment comparing the support recovery of \textsc{Prox-SGD} and \textsc{ProxGen}.

\subsection{Examples of Proximal Mappings}\label{sec:examples}

In this section, we provide novel update rules for various regularizers $\mathcal{R}(\theta)$ in the \textsc{ProxGen} framework.

\paragraph{$\ell_q$ Regularization $(0 \leq q \leq 1)$ with Diagonal Preconditioners.} We consider the regularizer, $\mathcal{R}(\theta) = \lambda \sum_{j=1}^{p} |\theta_j|^q$ for $\theta \in \mathbb{R}^p$ with diagonal preconditioner matrix $C_t$. In the case of $C_t = I$ (i.e. vanilla gradient descent), it is known that there exists closed-form solutions for proximal mappings \cite{cao2013fast} for $q \in \{0, \frac{1}{2}, \frac{2}{3}, 1\}$, which is our basis for derivations. We denote the $i$-th coordinate of the vector $\theta_t$ as $\theta_{t,i}$ and the diagonal entry $[C_t]_{ii}$ as $C_{t,i}$ briefly.

\paragraph{$\bullet$~ $\ell_1$ regularization.} The proximal mappings of $\ell_1$ with preconditioner could be computed efficiently via soft-thresholding operators as
\begin{align}\label{eqn:proxgenl1}
\widehat \theta_{t,i} = \theta_{t,i} - \alpha_t \frac{m_{t,i}}{C_{t,i} + \delta}, \quad \theta_{t+1, i} = \mathrm{sign}\big(\widehat \theta_{t,i}\big) \Big(\big|\widehat \theta_{t,i}\big| - \frac{\alpha_t \lambda}{C_{t,i} + \delta}\Big)
\end{align}

\paragraph{$\bullet$~ $\ell_0$ regularization.} In case of $\ell_0$ regularization, we can compute the closed-form solutions with hard-thresholding operators as
\begin{align}\label{eqn:proxgenl0}
\widehat \theta_{t,i} = \theta_{t,i} - \alpha_t \frac{m_{t,i}}{C_{t,i} + \delta}, \quad \theta_{t+1, i} = 
\begin{cases}
\widehat \theta_{t,i}, & |\widehat \theta_{t,i}| > \sqrt{\frac{2\alpha_t \lambda}{C_{t,i} + \delta}}, \\
0, & |\widehat \theta_{t,i}| < \sqrt{\frac{2\alpha_t \lambda}{C_{t,i} + \delta}} \\
\{0, \widehat \theta_{t,i}\}, & |\widehat \theta_{t,i}| = \sqrt{\frac{2\alpha_t \lambda}{C_{t,i} + \delta}}
\end{cases}
\end{align}

We defer the closed-form proximal mappings for $\ell_{1/2}$ and $\ell_{2/3}$ regularization to Appendix. The important family of diagonal preconditioners is adaptive gradient methods such as \textsc{Adam}.

\textbf{Revising \textsc{ProxQuant} \cite{bai2019proxquant}.}~ 
\begin{table}[t]
	\caption{The comparison of \textsc{ProxQuant} \cite{bai2019proxquant} and \emph{revised} \textsc{ProxQuant} (ProxGen).}
	\label{tab:revised_prox}
	\begin{center}
		\begin{tabular}{c|c}
			\toprule
			\textsc{ProxQuant} & Revised \textsc{ProxQuant} (Ours) \\
			\cmidrule{1-2}\morecmidrules\cmidrule{1-2}
			$\mathrm{prox}_{\alpha_t \lambda \mathcal{R}(\cdot)}\Big(\theta_t - \alpha_t (C_t + \delta I)^{-1} m_t\Big)$ & $\mathrm{prox}_{\alpha_t \lambda \mathcal{R}(\cdot)}^{C_t + \delta I}\Big(\theta_t - \alpha_t (C_t + \delta I)^{-1} m_t\Big)$ \\
			\bottomrule
		\end{tabular}
	\end{center}
\end{table}
Recently, \textsc{ProxQuant} proposes novel regularizations for network quantization. Especially for binary quantization, the authors propose the W-shaped regularizer defined as $\mathcal{R}_{\mathrm{bin}}(\theta) = \|\theta - \mathrm{sign}(\theta)\|_1$ where $\mathrm{sign}(\theta)$ is applied on $\theta$ in an element-wise manner. With this regularizer, the main difference between \textsc{ProxQuant} and our \textsc{ProxGen} approach is shown in Table \ref{tab:revised_prox}. Note that \textsc{ProxQuant} (left in Table \ref{tab:revised_prox}) does not consider the effect of preconditioners when computing proximal mappings. Therefore, we revise the proximal update in \textsc{ProxQuant} by considering preconditioners in proximal mappings with \textsc{ProxGen} (right in Table \ref{tab:revised_prox}).

Moreover, we also propose generalized regularizers motivated by our $\ell_q$ regularization for $0 < q < 1$:
\begin{align}\label{eqn:quant_lq}
\mathcal{R}_{\mathrm{bin}}^{q}(\theta) = \|\theta - \mathrm{sign}(\theta)\|_q 
\end{align}
%\textcolor{magenta}{Jihun: Where to put the following discussions? Here or the remarks in analysis section? What do you think?} 
In terms of theory, \textsc{ProxQuant} \cite{bai2019proxquant} proves the convergence in \emph{deterministic} setting only when the regularizer is \emph{differentiable}, which is also guaranteed only for vanilla \textsc{Sgd}. Note that, in contrast, our \emph{revised} \textsc{ProxQuant} completely bridges this gap in theory by the theorem which we will show in next section in \emph{stochastic} optimization and provides the \emph{exact}  %\textcolor{blue}{Aurelie: Question: By correct do you mean exact? Do you mean that ProxQuant was incorrect? or inexact?} \textcolor{magenta}{Jihun: Actually, the work \url{https://arxiv.org/abs/1910.08237} also point out that \textsc{ProxQuant} is NOT mirror descent. I think the expression such as ``\textsc{ProxQuant} could lead to suboptimal solution'' would be better.} 
update rule for solving problem \eqref{eqn:problem}. We also investigate the empirical differences of those two approaches in Section \ref{sec:exp}.

\section{Convergence Analysis}\label{sec:analysis}

In this section, we provide convergence guarantees for the \textsc{ProxGen} family. %To the best of our knowledge, our study is the first attempt to prove the convergence of stochastic proximal gradient with preconditioners.
%\textcolor{blue}{TODO (Jihun): Emphasize that this is the first attempt to prove the convergence with preconditioners. (Aurelie): I tried to do that by adding the above sentence.} \textcolor{blue}{That's very good!! thanks.} 
%We study optimization problems minimizing the following composite function:
%\begin{align}\label{eqn:composite}
%	F(\theta) = f(\theta) + \mathcal{R}(\theta)
%\end{align}
Our goal is to find an $\epsilon$-stationary point for the optimization problem \eqref{eqn:problem} where $\epsilon$ is the required precision.
%where $f$ is the loss function and $\mathcal{R}(\cdot)$ represents a (possibly) non-convex, non-smooth, and lower-semicontinuous regularizer. \textcolor{blue}{Aurelie: Question: Why not just refer to equation (1)?} \textcolor{magenta}{Jihun: } 
For notational convenience, we assume that the regularization parameter $\lambda$ is incorporated into $\mathcal{R}(\theta)$ in \eqref{eqn:problem}. In order to guarantee the convergence under this setting, we should deal with the subdifferential defined as:
\begin{definition}[Fr\'echet Subdifferential]\label{def:subdiff}
	Let $\varphi$ be a real-valued function. The Fr\'echet subdifferential of $\varphi$ at $\bar{\theta}$ with $|\varphi(\bar{\theta})| < \infty$ is defined by
	\begin{align*}
	\widehat{\partial}\varphi(\bar{x}) \coloneqq \Big\{\theta^{*} \in \Omega ~\Big|~ \liminf\limits_{\theta \rightarrow \bar{\theta}} \frac{\varphi (\theta) - \varphi(\bar{\theta}) - \langle \theta^{*}, \theta - \bar{\theta} \rangle}{\|\theta - \bar{\theta}\|} \geq 0 \Big\}.
	\end{align*}
\end{definition}

To derive the convergence bound, we make the following mild assumptions:
\begin{enumerate}[itemsep=0em,leftmargin=0.5cm, itemindent=0.65cm,label=\textbf{(C-$\bf{\arabic*}$)}, ref=\textnormal{(C-$\arabic*$)},start=1]
	\item {($L$-smoothness)} The loss function $f$ is differentiable, $L$-smooth, and lower-bounded:\label{con:smooth}
	\begin{align*}
	\|\nabla f(x) - \nabla f(y)\| \leq L\|x - y\|
	\end{align*}
	\item {(Bounded variance)} The stochastic gradient $g_t = \nabla f(\theta_t; \xi)$ is unbiased and has the bounded variance: \label{con:var}
	\begin{align*}
	\mathbb{E}_{\xi} \big[\nabla f(\theta_t; \xi)\big] = \nabla f(\theta_t), \quad \mathbb{E}_{\xi}\big[\|g_t - \nabla f(\theta_t)\|^2\big] \leq \sigma^2. 
	\end{align*}
	\item (i) final step-vector is finite, (ii) the stochastic gradient is bounded, and (iii) the momentum parameter should be exponentially decaying: \label{con:mild}
	\begin{align*}
	\text{(i)}~~\|\theta_{t+1} - \theta_t\| \leq D, \qquad \text{(ii)}~~\|g_t\| \leq G, \qquad \text{(iii)}~~\rho_t = \rho_0 \mu^{t-1} 
	\end{align*}
	with $D, G > 0$ and $\rho_0, \mu \in [0, 1)$.
	\item {(\emph{Sufficiently positive-definite})} The minimum eigenvalue of effective spectrums should be uniformly lower bounded over all time $t$ by some strictly positive constant $\gamma$: \label{con:mineig}
	\begin{align*}
	\lambda_{\mathrm{min}}\big(\alpha_t(C_t + \delta I)^{-1}\big) \geq \gamma > 0
	\end{align*}
\end{enumerate}

Conditions \ref{con:smooth} and \ref{con:var} are standard in general non-convex optimization \cite{ghadimi2016mini,xu2019non,ghadimi2013stochastic,zaheer2018adaptive}. In addition, condition \ref{con:mild} is extensively studied in previous literature in the context of adaptive gradient methods \cite{kingma15,j.2018on,chen2018on}. Lastly, a similar condition to \ref{con:mineig} is also considered in \cite{Yang2020ProxSGD,lee2012proximal,chen2018on,yun2019stochastic}, and %which states that the Hessian matrix should be sufficiently positive definite. 
it can be easily satisfied in practice. More discussion on Condition \ref{con:mineig} is provided later. %Also, the recent work \cite{Yang2020ProxSGD} study this condition, but without any theoretical justification.

Since the loss function $f$ is assumed to be differentiable as in \ref{con:smooth}, we have, at stationary points, $\bm{0} \in \widehat{\partial}F(\theta) = \nabla f(\theta) + \widehat{\partial}\mathcal{R}(\theta)$, so the convergence criterion is slightly different from that of general non-convex optimization. %Our goal is to find an $\epsilon$-stationary point for the optimization problem \eqref{eqn:problem} where $\epsilon$ denotes the required precision. 
Hence, we use the following convergence criterion $\mathbb{E}[\mathrm{dist}(\bm{0}, \widehat{\partial}F(\theta))] \leq \epsilon$ for an $\epsilon$-stationary point where $\mathrm{dist}(x, A)$ denotes the distance between a vector $x$ and a set $A$.
%\begin{align}\label{eqn:criteria}
%	\mathbb{E}\Big[\mathrm{dist}\big(\bm{0}, \widehat{\partial}F(\theta)\big)\Big] \leq \epsilon
%\end{align}
If no regularizer is considered ($\mathcal{R} = 0$), this criterion boils down to the one usually used in non-convex optimization, $\mathbb{E}[\|\nabla f(\theta)\|] \leq \epsilon$. We are now ready to state our main theorem for general convergence.

\begin{theorem}\label{thm:general_convergence}
	Let $\theta_a$ denote an iterate uniformly randomly chosen from $\{\theta_1, \cdots, \theta_T\}$. %\textcolor{blue}{[Aurelie: I moved the definition of $\theta_a$ at the beginning of the theorem statement. Question: Do you mean that $\theta_a$ is a random iterate? Or Do you mean that it is uniformly chosen at random from...?]} \textcolor{magenta}{Jihun: Oh, the latter is right. sorry for confusion..}
	Under the conditions \ref{con:smooth}, \ref{con:var}, \ref{con:mild}, \ref{con:mineig} with the initial stepsize $\alpha_0 \leq \frac{\delta}{3L}$ and non-increasing stepsize $\alpha_t$, \textsc{ProxGen}, Algorithm \ref{alg:proxgen_alg}, is guaranteed to yield
	\begin{align}
	\mathbb{E} \Big[\mathrm{dist}\big(\bm{0}, \widehat{\partial}F(\theta_a)\big)^2\Big] \leq \frac{Q_1\sigma^2}{T} \sum\limits_{t=0}^{T-1} \frac{1}{b_t} + \frac{Q_2 \Delta}{T} + \frac{Q_3}{T} 
	\end{align}
	where $\Delta = f(\theta) - f(\theta^{*})$ with optimal point $\theta^{*}$, and $b_t$ is the minibatch size at time $t$. The constants $\{Q_i\}_{i=1}^{3}$ on the right-hand side depend on the constants $\{\alpha_0, L, D, G, \rho_0, \mu, \gamma\}$, but not on $T$. %are independent of $T$. %Here, $\theta_a$ on the left-hand side represents the random iterate uniformly chosen from $\{\theta_1, \cdots, \theta_T\}$.
\end{theorem}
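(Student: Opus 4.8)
The plan is to combine three ingredients: a first-order optimality characterization of the proximal subproblem on line~8 (which exhibits an explicit element of $\widehat\partial F$), a one-step descent inequality for $F$ that controls $\sum_t\|\theta_{t+1}-\theta_t\|^2$, and a bound on the momentum error $\mathbb{E}\|m_t-\nabla f(\theta_t)\|^2$. First, since $\theta_{t+1}$ is a minimizer of $\theta\mapsto\langle m_t,\theta\rangle+\mathcal{R}(\theta)+\frac1{2\alpha_t}\|\theta-\theta_t\|_{C_t+\delta I}^2$ (with $\lambda$ absorbed into $\mathcal{R}$), Fermat's rule together with the Fr\'echet-subdifferential sum rule for a smooth-plus-lower-semicontinuous function gives $\bm{0}\in m_t+\widehat\partial\mathcal{R}(\theta_{t+1})+\frac1{\alpha_t}(C_t+\delta I)(\theta_{t+1}-\theta_t)$, so some $\eta_{t+1}\in\widehat\partial\mathcal{R}(\theta_{t+1})$ satisfies $\nabla f(\theta_{t+1})+\eta_{t+1}=\big(\nabla f(\theta_{t+1})-\nabla f(\theta_t)\big)+\big(\nabla f(\theta_t)-m_t\big)-\frac1{\alpha_t}(C_t+\delta I)(\theta_{t+1}-\theta_t)\in\widehat\partial F(\theta_{t+1})$. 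Bounding the first bracket with \ref{con:smooth} and the last term with \ref{con:mineig} (equivalently $\frac1{\alpha_t}(C_t+\delta I)\preceq\gamma^{-1}I$) I would get $\mathrm{dist}(\bm{0},\widehat\partial F(\theta_{t+1}))^2\le 3(L^2+\gamma^{-2})\|\theta_{t+1}-\theta_t\|^2+3\|\nabla f(\theta_t)-m_t\|^2$; re-indexing (apply the same identity at step $t-1$) turns this into a bound on $\mathrm{dist}(\bm{0},\widehat\partial F(\theta_t))$ for $t\ge 2$, the $t=1$ term costing only $O(1/T)$, which matches the $\theta_a\in\{\theta_1,\dots,\theta_T\}$ in the statement.

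Next I would add the $L$-smoothness inequality for $f$ to the optimality inequality $\langle m_t,\theta_{t+1}-\theta_t\rangle+\mathcal{R}(\theta_{t+1})-\mathcal{R}(\theta_t)+\frac1{2\alpha_t}\|\theta_{t+1}-\theta_t\|_{C_t+\delta I}^2\le 0$, use $\|\cdot\|_{C_t+\delta I}^2\ge\delta\|\cdot\|^2$, and split the cross term $\langle\nabla f(\theta_t)-m_t,\theta_{t+1}-\theta_t\rangle$ by Young's inequality with weight $\delta/(2\alpha_t)$; the stepsize condition $\alpha_t\le\alpha_0\le\delta/(3L)$ then forces the coefficient of $\|\theta_{t+1}-\theta_t\|^2$ to be at most $-L/4$, yielding $\frac L4\|\theta_{t+1}-\theta_t\|^2\le F(\theta_t)-F(\theta_{t+1})+\frac{\alpha_0}\delta\|\nabla f(\theta_t)-m_t\|^2$. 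Telescoping over $t$ and using that $F$ is bounded below (the regularizers of interest are nonnegative, so $F(\theta_1)-F(\theta_{T+1})\le\Delta+\mathcal{R}(\theta_1)$) gives $\sum_t\mathbb{E}\|\theta_{t+1}-\theta_t\|^2\le c_1\big(\Delta+\sum_t\mathbb{E}\|\nabla f(\theta_t)-m_t\|^2\big)$.

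For the momentum error, writing $e_t=m_t-\nabla f(\theta_t)$ and $\zeta_t=g_t-\nabla f(\theta_t)$, the recursion gives $e_t=\rho_t\big(e_{t-1}+\nabla f(\theta_{t-1})-\nabla f(\theta_t)\big)+(1-\rho_t)\zeta_t$; conditioning on $\xi_1,\dots,\xi_{t-1}$ the first summand is measurable while $\mathbb{E}[\zeta_t\mid\cdot]=\bm{0}$ by \ref{con:var}, so the cross term vanishes and $\mathbb{E}\|e_t\|^2\le\rho_t^2\,\mathbb{E}\|e_{t-1}+\nabla f(\theta_{t-1})-\nabla f(\theta_t)\|^2+(1-\rho_t)^2\sigma^2/b_t$. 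Using \ref{con:mild}(i)--(ii) to bound $\|e_{t-1}\|\le 2G$ and $\|\nabla f(\theta_{t-1})-\nabla f(\theta_t)\|\le LD$, and \ref{con:mild}(iii) to get $\sum_t\rho_t^2=\rho_0^2/(1-\mu^2)<\infty$, I obtain $\sum_t\mathbb{E}\|e_t\|^2\le C_e+\sigma^2\sum_t 1/b_t$ with $C_e$ depending only on $G,L,D,\rho_0,\mu$. Substituting this into the descent bound and then both into the $t$-average of the first step collects all terms into $\frac{Q_1\sigma^2}T\sum_t\frac1{b_t}+\frac{Q_2\Delta}T+\frac{Q_3}T$ with $Q_1,Q_2,Q_3$ explicit functions of $\alpha_0,L,D,G,\rho_0,\mu,\gamma$.

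The main obstacle I anticipate is the momentum bound: a crude estimate of $\mathbb{E}\|e_t\|^2$ in terms of the step sizes $\|\theta_{t+1}-\theta_t\|$ would create a circular dependence with the descent bound, so one must instead decouple the two estimates by invoking the uniform boundedness of $e_{t-1}$ and of the gradients (\ref{con:mild}(i)--(ii)), and rely on the exponential decay of $\rho_t$ (\ref{con:mild}(iii)) to make the stale-gradient contribution $\sum_t\rho_t^2$ summable — hence only a lower-order $O(1/T)$ term after averaging. A secondary subtlety is that $\mathcal{R}$ is merely lower semicontinuous and possibly non-convex, so the optimality step must use Fr\'echet-subdifferential calculus and Fermat's rule for the exact prox subproblem, together with \ref{con:mineig} to convert the preconditioned quadratic into a Euclidean one.
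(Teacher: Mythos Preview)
Your proposal is correct and shares the paper's high-level skeleton: (i) read off an explicit element of $\widehat\partial F(\theta_{t+1})$ from the optimality condition of the prox subproblem, (ii) combine the optimality inequality with $L$-smoothness to get a one-step descent that, after telescoping, controls $\sum_t\|\theta_{t+1}-\theta_t\|^2$, and (iii) bound $\mathrm{dist}(\bm 0,\widehat\partial F(\theta_{t+1}))^2$ by a constant times $\|\theta_{t+1}-\theta_t\|^2$ plus a momentum/noise term, then average. The one genuine difference is how the momentum error is handled. The paper never forms $e_t=m_t-\nabla f(\theta_t)$; instead it expands $m_t=(1-\rho_t)g_t+\rho_t m_{t-1}$, splits the cross term into $\langle g_t-\nabla f(\theta_t),\cdot\rangle$, $\langle\rho_t g_t,\cdot\rangle$, and $\langle\rho_t m_{t-1},\cdot\rangle$, and bounds the latter two pathwise via $\|g_t\|,\|m_{t-1}\|\le G$ and the summability of $\rho_t,\rho_t^2$, keeping $\|g_t-\nabla f(\theta_t)\|^2$ as the sole stochastic term until the very last step where the variance bound is applied. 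You instead derive the recursion $e_t=\rho_t(e_{t-1}+\nabla f(\theta_{t-1})-\nabla f(\theta_t))+(1-\rho_t)\zeta_t$ and exploit the conditional orthogonality $\mathbb{E}[\zeta_t\mid\mathcal{F}_{t-1}]=\bm 0$ to get $\mathbb{E}\|e_t\|^2\le\rho_t^2(2G+LD)^2+\sigma^2/b_t$ directly. Your route is slightly cleaner (one quantity $e_t$ instead of three pieces, and a transparent place where the unbiasedness assumption is actually used), while the paper's pathwise decomposition avoids taking expectations until the end; both lead to the same $\frac{Q_1\sigma^2}{T}\sum_t\frac1{b_t}+\frac{Q_2\Delta}{T}+\frac{Q_3}{T}$ structure with constants depending only on $\alpha_0,L,D,G,\rho_0,\mu,\gamma$.
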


From Theorem \ref{thm:general_convergence}, it can be seen that the appropriate minibatch size is important to ensure a good convergence rate. Various settings for the minibatch size could be employed for convergence guarantee (for example, dynamic minibatch size $b_t = t$), but in order to consider practical cases, we provide the following important corollary for \emph{constant minibatch size}.

\begin{corollary}[Constant Mini-batch]\label{cor:constant_minibatch}
	Under the same assumptions as in Theorem \ref{thm:general_convergence} with constant minibatch size $b_t = b = \Theta(T)$, we have $\mathbb{E} \big[\mathrm{dist}(\bm{0}, \widehat{\partial}F(\theta_a))^2\big] \leq \mathcal{O}\big(1/T\big)$
	%\begin{align}
	%   \mathbb{E} \Big[\mathrm{dist}\big(\bm{0}, \widehat{\partial}F(\theta_a)\big)^2\Big] \leq \mathcal{O}\big(\frac{1}{T}\big)
	%\end{align}
	and the total complexity is $\mathcal{O}(1/\epsilon^4)$ in order to have $\mathbb{E}\big[\mathrm{dist}\big(\bm 0, \widehat{\partial}F(\theta_a)\big)\big] \leq \epsilon$. %which is the \textcolor{blue}{optimal} rate as vanilla proximal gradient descent.
\end{corollary}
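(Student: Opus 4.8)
The corollary is an immediate consequence of Theorem~\ref{thm:general_convergence}, so I first dispatch it and then sketch the theorem, where the real work lies. Substituting the constant schedule $b_t=b=\Theta(T)$ into the bound of Theorem~\ref{thm:general_convergence} gives $\frac{1}{T}\sum_{t=0}^{T-1}\frac{1}{b_t}=\frac{1}{b}=\Theta(1/T)$, so the first term is $\Theta(1/T)$ while the remaining two terms $\frac{Q_2\Delta}{T}$ and $\frac{Q_3}{T}$ are already $\mathcal{O}(1/T)$; hence $\mathbb{E}[\mathrm{dist}(\bm 0,\widehat{\partial}F(\theta_a))^2]\le\mathcal{O}(1/T)$. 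By Jensen's inequality $\mathbb{E}[\mathrm{dist}(\bm 0,\widehat{\partial}F(\theta_a))]\le\sqrt{\mathbb{E}[\mathrm{dist}(\bm 0,\widehat{\partial}F(\theta_a))^2]}=\mathcal{O}(1/\sqrt{T})$, so $T=\mathcal{O}(\epsilon^{-2})$ iterations suffice for an $\epsilon$-stationary point; since each iteration queries $b=\Theta(T)$ stochastic gradients, the total number of gradient evaluations is $Tb=\Theta(T^2)=\mathcal{O}(\epsilon^{-4})$.

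\textbf{Proof plan (Theorem~\ref{thm:general_convergence}).} The strategy is the standard two-ingredient argument for proximal methods: a one-step sufficient-decrease inequality for $F$, and a bound converting the step length $\|\theta_{t+1}-\theta_t\|$ into a bound on $\mathrm{dist}(\bm 0,\widehat{\partial}F(\theta_{t+1}))$; these are then telescoped and averaged over $t$. For the subgradient bound, use that $\theta_{t+1}$ minimizes the subproblem on line~8: by the Fr\'echet-subdifferential sum rule (all summands except $\mathcal{R}$ are smooth) one gets $-m_t-\tfrac{1}{\alpha_t}(C_t+\delta I)(\theta_{t+1}-\theta_t)\in\widehat{\partial}\mathcal{R}(\theta_{t+1})$, hence $v_{t+1}:=\nabla f(\theta_{t+1})-m_t-\tfrac{1}{\alpha_t}(C_t+\delta I)(\theta_{t+1}-\theta_t)\in\widehat{\partial}F(\theta_{t+1})$ since $f$ is differentiable. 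Writing $\nabla f(\theta_{t+1})-m_t=[\nabla f(\theta_{t+1})-\nabla f(\theta_t)]+[\nabla f(\theta_t)-g_t]+[g_t-m_t]$, bound the three brackets by $L\|\theta_{t+1}-\theta_t\|$ (Condition~\ref{con:smooth}), the stochastic noise, and $\rho_t\|g_t-m_{t-1}\|\le 2G\rho_t$ (unrolling $m_t-g_t=\rho_t(m_{t-1}-g_t)$ and using Condition~\ref{con:mild}); crucially, recast Condition~\ref{con:mineig} as the spectral upper bound $C_t+\delta I\preceq(\alpha_t/\gamma)I$, i.e.\ $\lambda_{\mathrm{max}}(C_t+\delta I)\le\alpha_t/\gamma$, which yields $\tfrac{1}{\alpha_t}\|(C_t+\delta I)(\theta_{t+1}-\theta_t)\|\le\gamma^{-1}\|\theta_{t+1}-\theta_t\|$. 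Collecting,
\[
\mathrm{dist}(\bm 0,\widehat{\partial}F(\theta_{t+1}))^2\le\|v_{t+1}\|^2\le c_1\|\theta_{t+1}-\theta_t\|^2+c_1\|g_t-\nabla f(\theta_t)\|^2+c_1G^2\rho_t^2
\]
for a constant $c_1=c_1(L,\gamma)$.

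\textbf{One-step decrease and telescoping.} Add the descent lemma $f(\theta_{t+1})\le f(\theta_t)+\langle\nabla f(\theta_t),\theta_{t+1}-\theta_t\rangle+\tfrac{L}{2}\|\theta_{t+1}-\theta_t\|^2$ to the optimality inequality obtained by testing the subproblem at $\theta=\theta_t$, namely $\langle m_t,\theta_{t+1}-\theta_t\rangle+\mathcal{R}(\theta_{t+1})-\mathcal{R}(\theta_t)+\tfrac{1}{2\alpha_t}\|\theta_{t+1}-\theta_t\|_{C_t+\delta I}^2\le0$; using $\|\cdot\|_{C_t+\delta I}^2\ge\delta\|\cdot\|^2$, Young's inequality on $\langle\nabla f(\theta_t)-m_t,\theta_{t+1}-\theta_t\rangle$, and the stepsize choice $\alpha_t\le\alpha_0\le\delta/(3L)$ (so that $\tfrac{\delta}{2\alpha_t}$ dominates the $\tfrac{L}{2}$ from the descent lemma plus the $\tfrac{L}{2}$ from Young), one obtains $F(\theta_{t+1})\le F(\theta_t)-c_2\|\theta_{t+1}-\theta_t\|^2+\tfrac{1}{2L}\|\nabla f(\theta_t)-m_t\|^2$ for some $c_2>0$. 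Summing over $t=1,\dots,T$ and using that $F$ is bounded below, $\sum_{t=1}^{T}\|\theta_{t+1}-\theta_t\|^2\le c_2^{-1}(\Delta+\tfrac{1}{2L}\sum_t\|\nabla f(\theta_t)-m_t\|^2)$. Taking expectations, Condition~\ref{con:var} with minibatch averaging gives $\mathbb{E}\|g_t-\nabla f(\theta_t)\|^2\le\sigma^2/b_t$, and $\nabla f(\theta_t)-m_t=(\nabla f(\theta_t)-g_t)+\rho_t(g_t-m_{t-1})$ with Condition~\ref{con:mild}(iii) gives $\mathbb{E}\|\nabla f(\theta_t)-m_t\|^2\le 2\sigma^2/b_t+8G^2\rho_0^2\mu^{2(t-1)}$, whose sum over $t$ is at most $2\sigma^2\sum_t 1/b_t+8G^2\rho_0^2/(1-\mu^2)$. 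Averaging the displayed subgradient bound over $t$, plugging in the telescoped estimate for $\sum\|\theta_{t+1}-\theta_t\|^2$, and recalling that $\theta_a$ is uniform on $\{\theta_1,\dots,\theta_T\}$ yields the claimed $\tfrac{Q_1\sigma^2}{T}\sum_{t=0}^{T-1}\tfrac{1}{b_t}+\tfrac{Q_2\Delta}{T}+\tfrac{Q_3}{T}$; the boundary term created by the index shift $t\leftrightarrow t+1$ is absorbed into $Q_3$ using $\|\theta_{t+1}-\theta_t\|\le D$ from Condition~\ref{con:mild}(i).

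\textbf{Main obstacle.} The delicate parts are: (i) the Fr\'echet-subdifferential bookkeeping, i.e.\ justifying that stationarity of the nonconvex, merely lower-semicontinuous subproblem yields $v_{t+1}\in\widehat{\partial}F(\theta_{t+1})$ — the sum rule applies cleanly here only because $\mathcal{R}$ is the single nonsmooth summand; (ii) recognizing Condition~\ref{con:mineig} as a two-sided spectral control — it not only keeps the proximal step nondegenerate but, recast as $\lambda_{\mathrm{max}}(C_t+\delta I)\le\alpha_t/\gamma$, is exactly what tames the $\alpha_t^{-1}(C_t+\delta I)(\theta_{t+1}-\theta_t)$ term in $v_{t+1}$; and (iii) tracking constants so that $\alpha_0\le\delta/(3L)$ is enough to keep the coefficient of $\|\theta_{t+1}-\theta_t\|^2$ strictly negative after both the descent lemma and Young's inequality have taken their share. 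The variance reduction via $b_t$, the geometric sum produced by the momentum decay, and the telescoping itself are routine.
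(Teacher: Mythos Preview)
Your derivation of the corollary from Theorem~\ref{thm:general_convergence} is exactly right, and your sketch of the theorem follows the paper's argument essentially step for step: optimality of the subproblem gives an explicit element of $\widehat{\partial}F(\theta_{t+1})$, the descent lemma plus the optimality inequality tested at $\theta_t$ yields the sufficient-decrease bound on $\sum_t\|\theta_{t+1}-\theta_t\|^2$, Condition~\ref{con:mineig} is recast as $\lambda_{\max}\!\big(\tfrac{1}{\alpha_t}(C_t+\delta I)\big)\le 1/\gamma$ to control the preconditioned step, and the momentum error is handled via $m_t-g_t=\rho_t(m_{t-1}-g_t)$ together with the geometric decay $\rho_t=\rho_0\mu^{t-1}$. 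The three obstacles you flag are precisely the places where the paper spends its effort.
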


\paragraph{Remarks.} Here we make several comments on our results and relationship with prior work.

\begin{itemize}[leftmargin=4mm]
	\item (On convergence of vanilla \textsc{Sgd})~ The very recent work \cite{xu2019non} analyzes the convergence of stochastic proximal methods for vanilla \textsc{Sgd} under similar assumptions to ours, which is a special case of our \textsc{ProxGen} framework, with $C_t = I$, $\rho_0 = 0$ and constant stepsize $\alpha_t = \alpha$. Our Corollary \ref{cor:constant_minibatch} enjoys the same convergence rate as Corollary 3 in \cite{xu2019non}. Note that our analysis also allows for \emph{non-increasing stepsizes}, which is much more practical in real problems. 
	
	\item (On convergence of EMA approaches)~ In terms of adaptive methods, \cite{zaheer2018adaptive} proves the convergence of \textsc{RMSprop} for general non-convex optimization, but this work considers unregularized training %does not aim at regularized training 
	(which corresponds to $\mathcal{R}(\theta) = 0$, constant stepsize $\alpha_t = \alpha$, and $\rho_t = 0$ in \textsc{ProxGen}). Our Corollary \ref{cor:constant_minibatch} achieves the exact same convergence rate for \textsc{RMSprop} as Corollary 3 in \cite{zaheer2018adaptive}. Also, our analysis can guarantee the convergence of \textsc{Adam}, which corresponds to non-zero $\rho_t$ with \emph{non-increasing stepsizes $\alpha_t$}.
	
	\item (On relationship with \textsc{Prox-SGD} \cite{Yang2020ProxSGD})~ \textsc{Prox-SGD}, as introduced in Section \ref{sec:main}, guarantees the convergence, but not how fast it converges. Moreover, this is proved without considering preconditioners. In contrast, \textsc{ProxGen} framework provides an \emph{exact} proximal update backed by detailed theoretical support. %\textcolor{blue}{[Aurelie: I replaced sound by detailed, because we do not want to upset ProxSGD authors b/c they might take it as their work is \emph{un}sound]} [Jihun: Thanks, Aurelie! I agree with you!]
	
	\item (On examples satisfying \ref{con:mineig})~ Condition \ref{con:mineig} can be easily satisfied according to algorithmic details. %For instance, vanilla \textsc{Sgd}, where $C_t = I$ and $\rho_t = 0$, with constant stepsize $\alpha_t = \alpha$ satisfies \ref{con:mineig} with $\gamma = \frac{\alpha}{\delta + 1} \approx \alpha$. 
	The popular optimization algorithm \textsc{Adam} \cite{kingma15} where $C_t = \sqrt{\beta C_{t-1} + (1 - \beta) g_t^2}$ for $\beta \in [0,1)$ satisfies this condition with $\gamma = \frac{\alpha}{G + \delta}$ under $\alpha_t = \alpha$. Detailed derivations and other examples are provided in the Appendix. %Another example is \textsc{AdaGrad} with $\alpha_t = \alpha / \sqrt{t}$ and $C_t = \Big(\frac{1}{t}\sum_{\tau=1}^{t} g_t g_t^T\Big)^{1/2}$, which achieves $\gamma = \frac{\alpha}{G + \delta}$. The detailed derivations are deferred to the Appendix. \textcolor{blue}{Jihun: I think that the Fisher information matrix also satisfies this condition. I will add more details on this.} %Since $\lambda_{\mathrm{min}}(\alpha_t(C_t + \delta I)^{-1}) = \frac{\alpha_t}{\lambda_{\mathrm{max}}(C_t + \delta I)}$ and we have $\lambda_{\mathrm{max}}(C_t + \delta I) \leq G + \delta$, we can conclude $\gamma = \frac{\alpha}{G + \delta}$. \textcolor{blue}{Jihun: deferring the detail derivation to Appendix?}
	
	\item (On mini-batch condition in Corollary \ref{cor:constant_minibatch})~ In Corollary \ref{cor:constant_minibatch}, we believe that the condition on minibatch size $b = \Theta(T)$ is not stringent. As an example, consider a problem with sample size $n$ and minibatch size $b$ with maximum 200 epochs. %Since the minibatch size is $b$, 
	Then, the total iteration number $T$ should be $\Theta(\frac{200n}{b})$ resulting in $b = \Theta(\sqrt{n})$, which is practical in real cases.
	
	\item (On connections to second-order methods)~ Our analysis can provide guarantees for \emph{positive} second-order preconditioners as long as Condition \ref{con:mineig} is satisfied (The empirical Fisher information matrix \cite{martens2015optimizing} is one example). %(Indeed, the Fisher information matrix satisfies \ref{con:mineig}). 
	Although second-order solvers generally enjoy very fast convergence under strongly convex loss \cite{lee2012proximal,zhang2019fast}, it could be understood that our analysis guarantees \emph{at least a sublinear rate for such second-order preconditioners} with less stringent conditions. %\textcolor{magenta}{Jihun: what about this?}\textcolor{blue}{Aurelie: I think it's great. Eunho: Do you agree?} %\textcolor{blue}{Note that our analysis can also provide guarantees for \emph{positive} second-order preconditioners as long as Condition \ref{con:mineig} is satisfied. Although second-order methods generally enjoy very fast convergence under strongly convex loss \cite{lee2012proximal,zhang2019fast}, it could be understood that our analysis guarantees the \emph{minimal convergence rate for such second-order preconditioners} with less stringent conditions.} \textcolor{magenta}{Aurelie: Question: Do you mean 'fast' convergence rate? Minimal is not typically used. People say fast, sharp but not minimal.} \textcolor{blue}{Jihun: I mean... generally second-order methods enjoy very fast convergence (ex. better than sublinear rate) than first-order methods, but they require strongly convex loss (in many cases). Although our analysis do not guarantee such fast convergence, I mean we can guarantee ``at least'' sublinear rate with less stringent conditions. Now I see... the word ``minimal'' seems inappropriate.}
\end{itemize}

\section{Experiments}\label{sec:exp}

We consider two important tasks for regularized training in deep learning communities: (i) training sparse neural networks and (ii) network quantization. Throughout our experiments, we consider \textsc{Adam} as a representative of \textsc{ProxGen} where $m_t = \rho_t m_{t-1} + (1 - \rho_t) g_t$ with constant decaying parameter $\rho_t = 0.9$ and $C_t = \sqrt{\beta C_{t-1} + (1 - \beta) g_t^2}$ with $\beta = 0.999$ in Algorithm \ref{alg:proxgen_alg}. The details on other hyperparameter settings for each experiment are provided in the Appendix.

%\subsection{Training Sparse Neural Networks}

\begin{figure}[t]
	\centering
	\subfigure[$\ell_1$ regularization]{\includegraphics[width=0.49\linewidth]{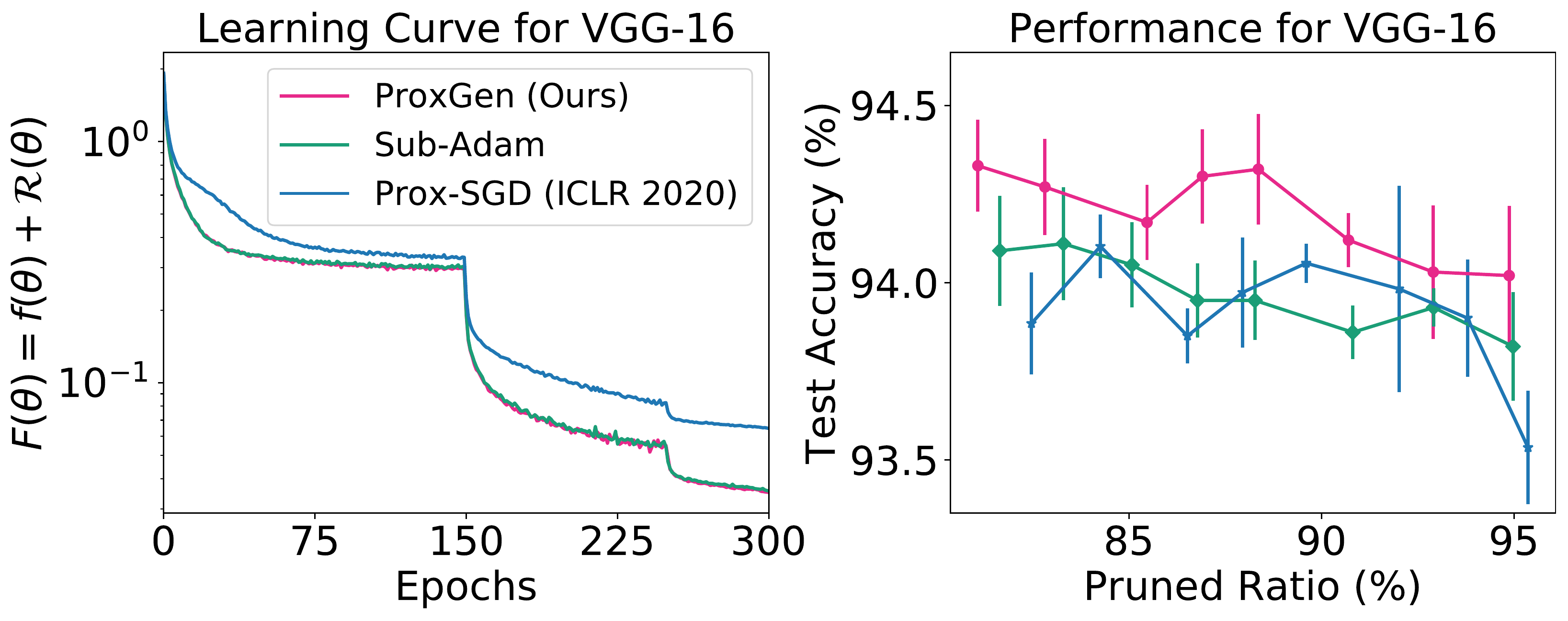}}
	\subfigure[$\ell_{2/3}$ regularization]{\includegraphics[width=0.49\linewidth]{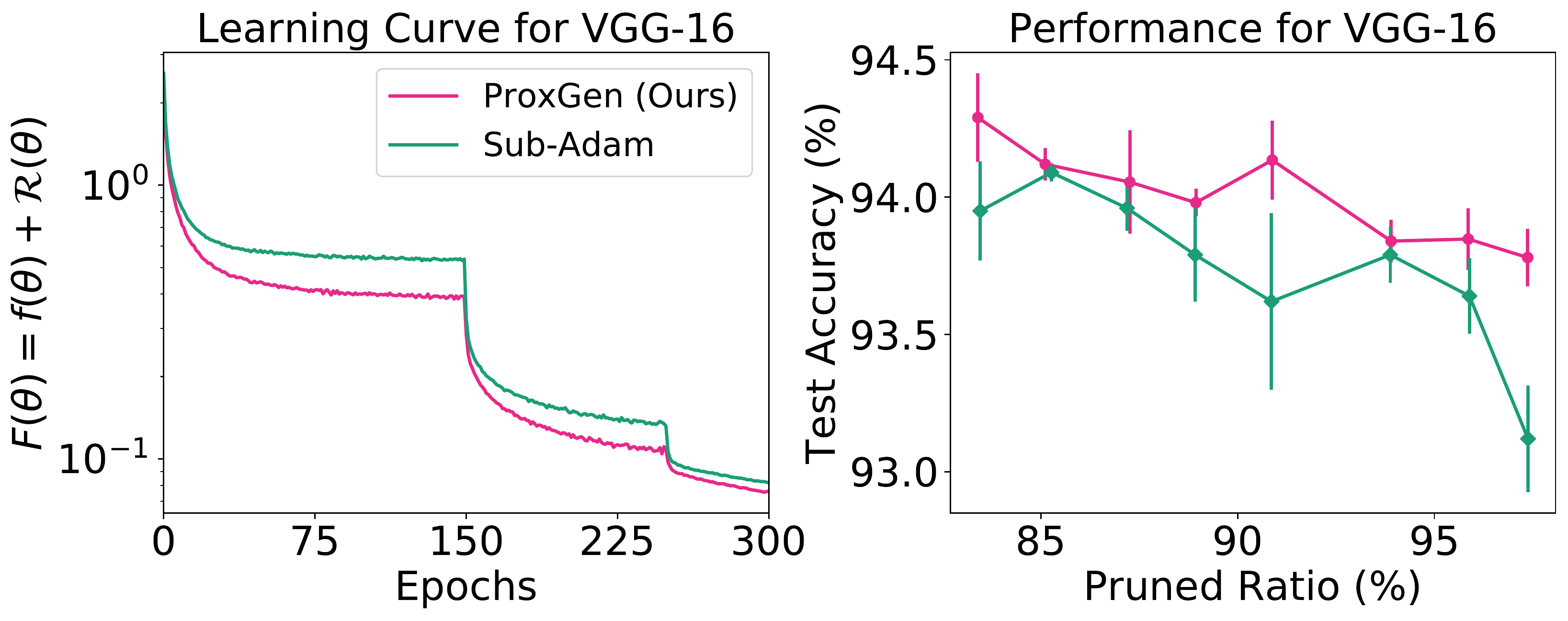}}
	\subfigure[$\ell_{1/2}$ regularization]{\includegraphics[width=0.49\linewidth]{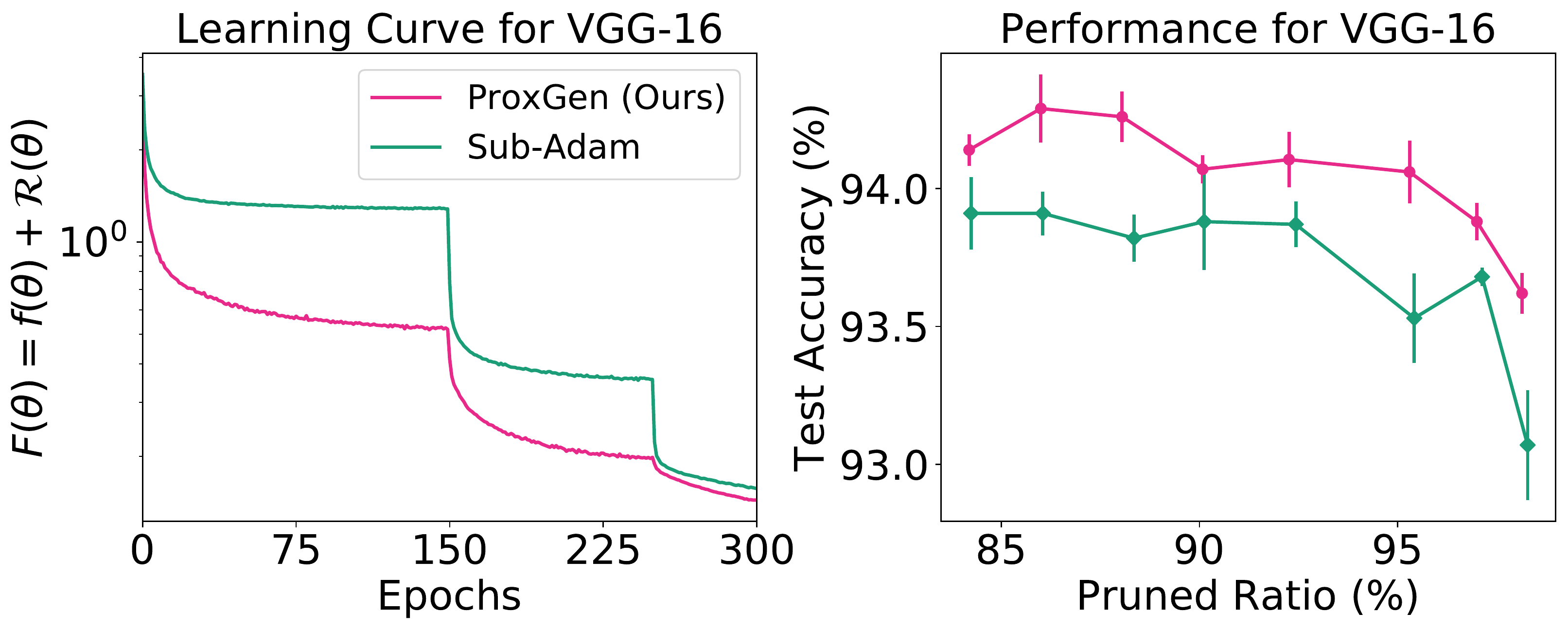}}
	\subfigure[$\ell_0$ regularization]{\includegraphics[width=0.24\linewidth]{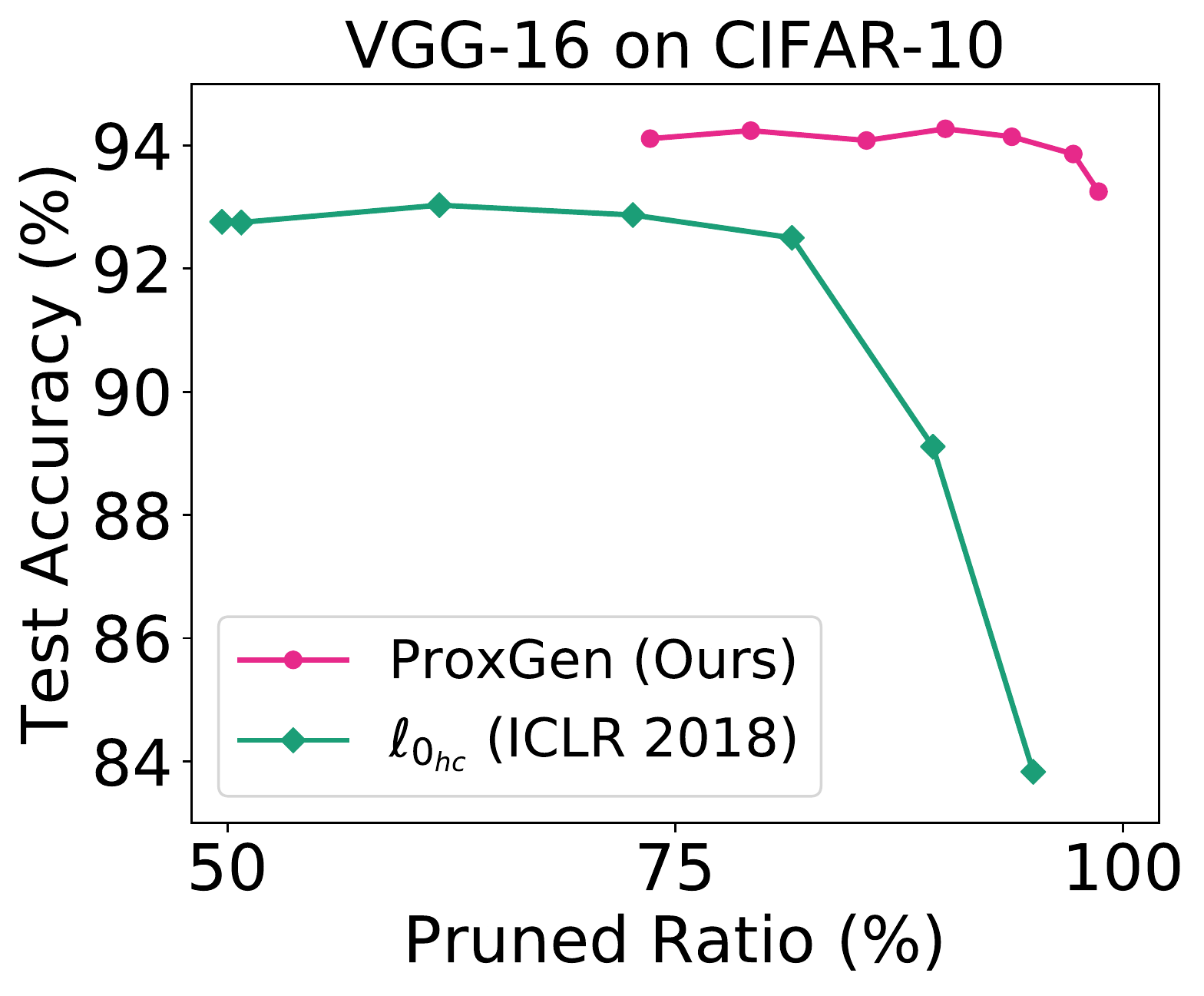}}
	\caption{Comparison for sparse VGG-16 on CIFAR-10 dataset.}
	\label{fig:lq_reg_vgg}
\end{figure}

\begin{figure}[t]
	\centering
	\subfigure[$\ell_1$ regularization]{\includegraphics[width=0.49\linewidth]{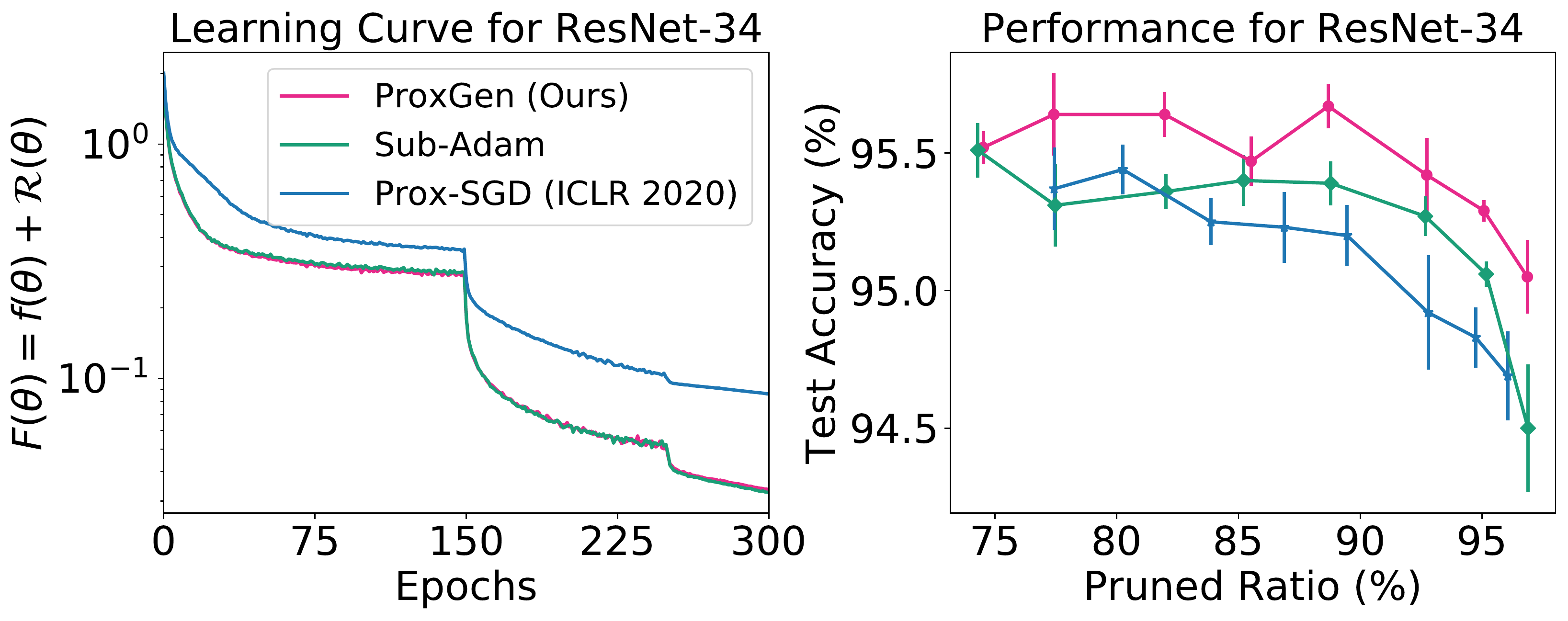}}
	\subfigure[$\ell_{2/3}$ regularization]{\includegraphics[width=0.49\linewidth]{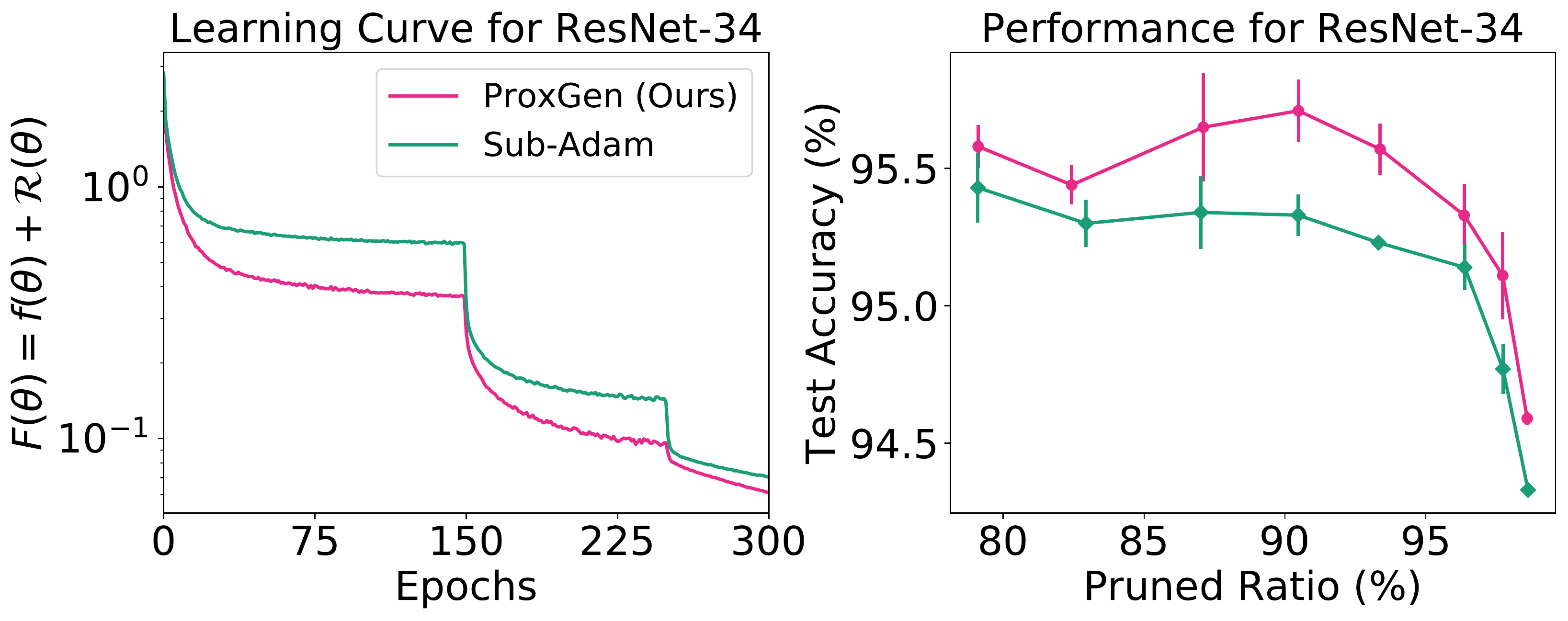}}
	\subfigure[$\ell_{1/2}$ regularization]{\includegraphics[width=0.49\linewidth]{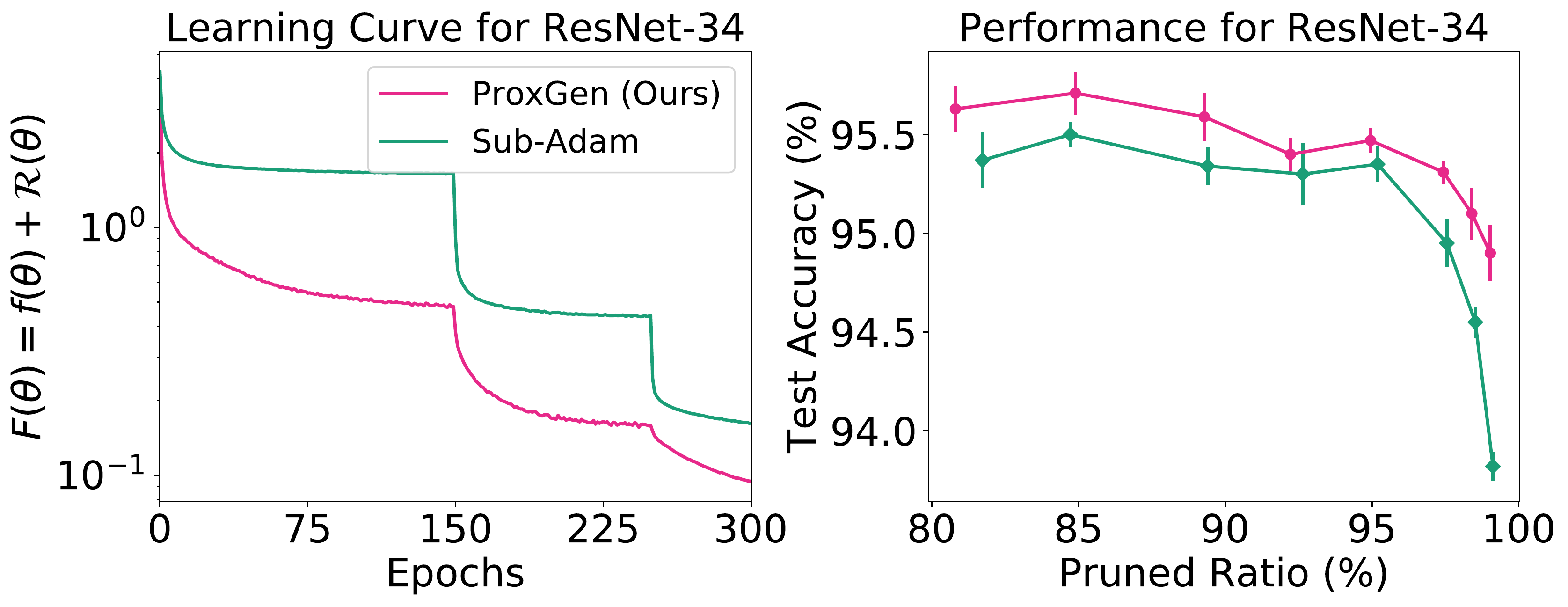}}
	\subfigure[$\ell_0$ regularization]{\includegraphics[width=0.225\linewidth]{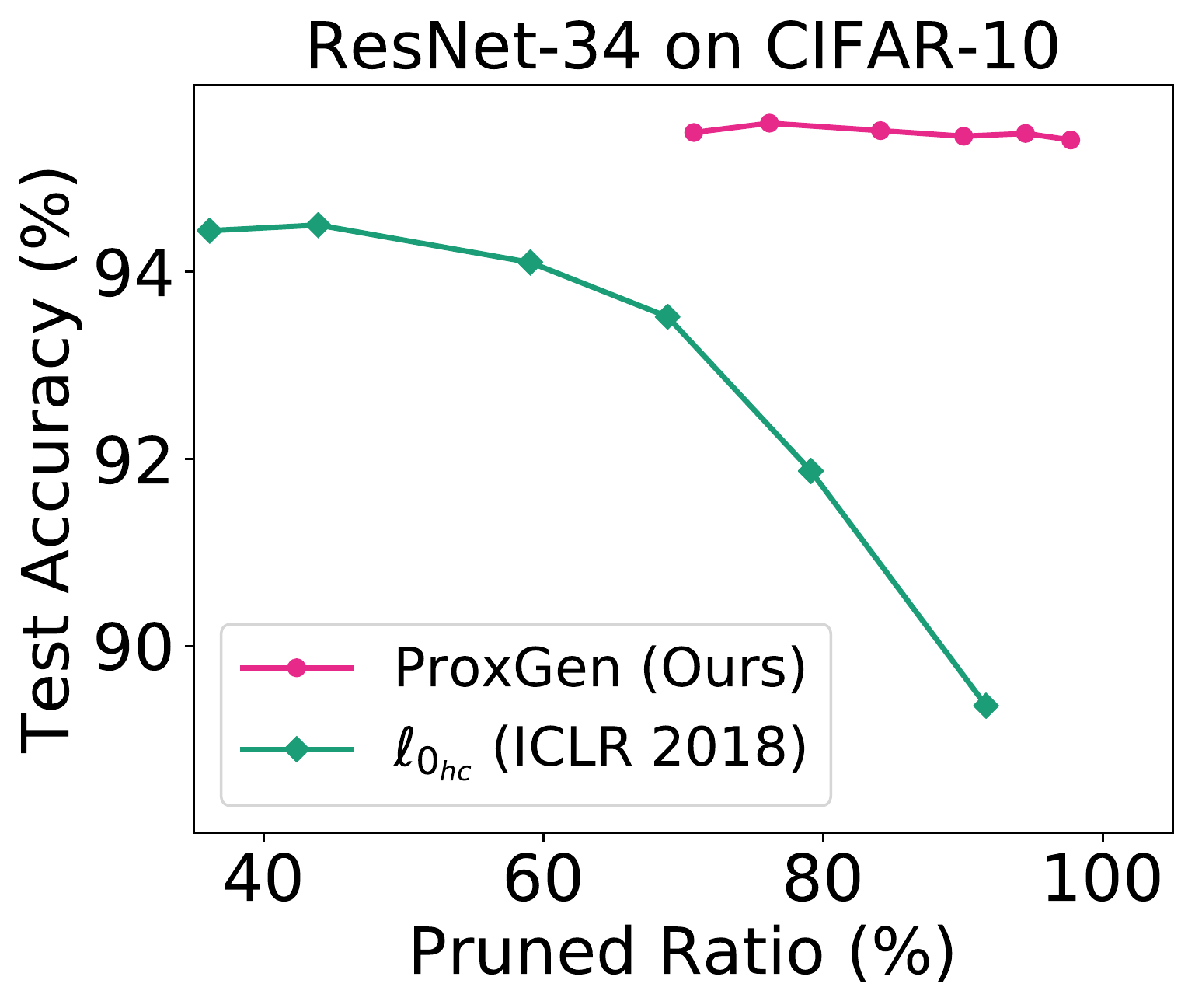}}
	\caption{Comparison for sparse ResNet-34 on CIFAR-10 dataset.}
	\label{fig:lq_reg_resnet}
\end{figure}

\paragraph{Training Sparse Neural Networks.} Motivated by the lottery ticket hypothesis \cite{frankle2018the}, we consider training VGG-16 \cite{simonyan2014very} and ResNet-34 \cite{he2016deep} on CIFAR-10 dataset using sparsity encouraging regularizers. Toward this, we consider the following objective function with $\ell_q$ regularization: $F(\theta) \coloneqq \mathbb{E}_{\xi \sim \mathbb{P}}[f(\theta; \xi)] + \lambda \sum_{j=1}^{p} |\theta_j|^q$
%\begin{align}\label{eqn:lq_objective}
%	F(\theta) \coloneqq \mathbb{E}_{\xi \sim \mathbb{P}}[f(\theta; \xi)] + \lambda \sum\limits_{j=1}^{p} |\theta_j|^q
%\end{align}
where $0 \leq q \leq 1$. We optimize the network parameters with the closed-form proximal mappings introduced in Section \ref{sec:examples}. In order to reflect the most practical training settings, we also consider the decoupled weight decay regularization \cite{loshchilov2018decoupled,zhang2018three}.

We compare \textsc{ProxGen} with subgradient methods and also include \textsc{Prox-SGD} \cite{Yang2020ProxSGD} as a baseline especially for $\ell_1$ regularization since \textsc{Prox-SGD} considers only convex regularizers. In \textsc{Prox-SGD}, the hand-crafted fine-tuned scheduling on $\alpha_t$ and $\rho_t$ is essential for fast convergence and good performance, but in our experiments we use standard settings $\rho_t = 0.9$ with step-decay learning rate scheduling for fair comparisons. For $\ell_0$ regularization, the problem \eqref{eqn:problem} cannot be optimized in a subgradient manner, so we compare \textsc{ProxGen} with another popular baseline, $\ell_{0_{hc}}$ \cite{louizos2018learning} which approximates the $\ell_0$-norm via hard-concrete distributions.

Figures \ref{fig:lq_reg_vgg} and \ref{fig:lq_reg_resnet} illustrate the results for VGG-16 and ResNet-34 respectively. In terms of convergence, \textsc{ProxGen} shows faster convergence than \textsc{Prox-SGD} \cite{Yang2020ProxSGD} for $\ell_1$ regularization, but there is no difference between \textsc{ProxGen} and subgradient methods. However, there are significant differences in convergence for non-convex regularizers $\ell_{1/2}$ and $\ell_{2/3}$, which get bigger as $q$ decreases. We believe this might be due to the fact that the $\ell_q$-norm derivative, $q/|\theta|^{1-q}$, is very large for tiny (but non-zero) $\theta$ for $0 < q < 1$. Meanwhile, the derivative of $|\theta|$ is nothing but the sign value regardless of size of $\theta$, hence the large gradient of $|\theta|^{q}$ may hinder convergence. %due to severe oscillations, 
The learning curves in Figure \ref{fig:lq_reg_vgg}-(b,c) and \ref{fig:lq_reg_resnet}-(b,c) empirically corroborate this phenomenon. %Also, we can see that the differences gets larger as $q$ value decreases.

In terms of performance, we can see that \textsc{ProxGen} consistently achieves better performance than baselines for both VGG-16 and ResNet-34 with similar or even better sparsity level. %\textcolor{blue}{except} $\ell_0$ regularization. \textcolor{blue}{Aurelie: Question: do you mean 'except' or 'including'?} [Jihun: Oh, this is my mistake. sorry..]
Importantly, \textsc{ProxGen} with $\ell_0$ outperforms $\ell_{0_{hc}}$ baseline by a great margin. This might be due to the design of $\ell_{0_{hc}}$, which approximates $\|\bm \theta\|_0 = \sum_{j=1}^{p} |\theta_j|_0$ with binary mask $z_j$ parameterized by learnable probability $\pi_j$ for each coordinate. Thus, the number of parameters to be optimized is doubled, which might make optimization harder. In constrast, \textsc{ProxGen} does not introduce additional parameters.

%\subsection{Training Binary Neural Networks}

\begin{table}[t]
	\caption{Comparison for binary neural networks. The best performance in mean value is highlighted.}
	\centering
	{\footnotesize
		\begin{tabular}{c|c||c|c||c|c|c}
			\toprule
			\multicolumn{7}{c}{Test Error (\%)} \\
			\midrule
			\multicolumn{4}{c||}{Baselines} & \multicolumn{3}{c}{\textsc{ProxGen} (Ours)} \\
			\cmidrule{1-7}\morecmidrules\cmidrule{1-7}
			Model & \makecell{Full~\\Precision~\\(32-bit)} & \makecell{BinaryConnect~\\\cite{courbariaux2015binaryconnect}} & \makecell{\textsc{ProxQuant}~\\\cite{bai2019proxquant}} & \makecell{Revised~\\ProxQuant~\\$\ell_1$} & \makecell{Revised~\\ProxQuant~\\$\ell_{2/3}$} & \makecell{Revised~\\ProxQuant~\\$\ell_{1/2}$} \\
			\midrule
			ResNet-20 & 8.06 & 9.54 $\pm$ 0.03 & \textbf{9.35} $\pm$ 0.13 & 9.50 $\pm$ 0.12 & 9.72 $\pm$ 0.06 & 9.78 $\pm$ 0.18 \\
			ResNet-32 & 7.25 & 8.61 $\pm$ 0.27 & 8.53 $\pm$ 0.15 & 8.29 $\pm$ 0.07 & \textbf{8.22} $\pm$ 0.05 & 8.43 $\pm$ 0.15 \\
			ResNet-44 & 6.96 & 8.23 $\pm$ 0.23 & 7.95 $\pm$ 0.05 & \textbf{7.68} $\pm$ 0.07 & 7.91 $\pm$ 0.08 & 7.90 $\pm$ 0.13 \\
			ResNet-56 & 6.54 & 7.97 $\pm$ 0.22 & 7.70 $\pm$ 0.06 & \textbf{7.52} $\pm$ 0.18 & 7.60 $\pm$ 0.09 & 7.61 $\pm$ 0.12 \\
			\bottomrule
		\end{tabular}
	}
	\label{tab:quant_results}
\end{table}

\paragraph{Training Binary Neural Networks.}~ In the second set of experiments, we consider the network quantization constraining the parameters to some set of discrete values which is a key approach for model compression. We evaluate our revised \textsc{ProxQuant} in Table \ref{tab:revised_prox} with extended regularization \eqref{eqn:quant_lq} in Section \ref{sec:examples}. We consider the following objective function with quantization-specific regularizers: $F(\theta) \coloneqq \mathbb{E}_{\xi \sim \mathbb{P}}[f(\theta; \xi)] + \lambda \sum_{j=1}^{p} |\theta_j - \sign(\theta_j)|^q$ where $0 \leq q \leq 1$. For comparisons, we quantize ResNet \cite{he2016deep} on CIFAR-10 dataset and follow the same experiment settings as in \textsc{ProxQuant} \cite{bai2019proxquant}.

Table \ref{tab:quant_results} presents the results. For all $q$ values, revised \textsc{ProxQuant} consistently outperforms the baselines except for ResNet-20, which implies \textsc{ProxGen} may work better for larger networks. As such, our generalized regularizers \eqref{eqn:quant_lq} contribute to one of the state-of-the-art optimization-based methods in network quantization. %in optimization perspectives. %\textcolor{blue}{Aurelie: do you mean 'the state-of-the-art optimization-based methods of network quantization'?} \textcolor{magenta}{Jihun: 
%Yes, I think `one of the state-of-the-art' methods would be better.}
Notably, revised \textsc{ProxQuant} $\ell_1$ greatly outperforms \textsc{ProxQuant} baseline while these two approaches differ only in update rules (see Table \ref{tab:revised_prox}). Hence, we can conclude that revised \textsc{ProxQuant} based on our \textsc{ProxGen} provides an \emph{exact} proximal update and furthermore yields more generalizable solutions. In our experience, revised \textsc{ProxQuant} $\ell_0$ shows little degradation in performance, so we do not include this result. However, revised \textsc{ProxQuant} $\ell_0$ shows superiority to baselines for language modeling, whose preliminary results are deferred to the Appendix.

\section{Conclusion}

In this work, we proposed \textsc{ProxGen}, the first general family of stochastic proximal gradient methods. Within our framework, we presented novel examples of proximal versions of standard SGD approaches, including a proximal version of \textsc{Adam}. We analyzed the convergence of the whole \textsc{ProxGen} family and showed that \textsc{ProxGen} can encompass the results of several previous studies. We also demonstrated that \textsc{ProxGen} empirically outperforms subgradient-based methods for popular deep learning problems. As future work, we plan to study efficient approximations of proximal mappings for structured regularizers such as $\ell_1/\ell_q$ norms with preconditioners.

\section*{Broader Impact}
Our work proposes a general framework for stochastic proximal gradient descent for deep learning. Our framework \textsc{ProxGen} would benefit both researchers and practitioners in machine learning. From a theoretical perspective, \textsc{ProxGen} provides the first exact proximal gradient descent updates for a wide class of regularized optimization %deep learning 
problems, and opens up an avenue of research in studying various combinations of regularizers and preconditioners, as well as devising efficient %approximations 
computations for proximal mappings. From a practical standpoint, \textsc{ProxGen} enables machine learning practitioners to solve a wide class of regularized deep learning problems while enjoying faster convergence and better generalization. %using state-of-the-art optimization approaches.
We do not believe that our research puts anyone at disadvantage. Our framework is backed by theoretical guarantees, provided that some mild conditions are satisfied. If these conditions were violated, %If such conditions are violated, 
the algorithm might not converge properly, but our experimental results show that \textsc{ProxGen} converges well even for ReLU networks in practice.
Our framework is a general purpose optimization approach to solve deep learning problems. As such, it does not target the identification nor uses bias in datasets.

\bibliographystyle{unsrt}
\bibliography{proxgen}

\clearpage
\small
\appendix

\section*{Supplementary Materials}

\section{Comparison for Support Recovery}
For support recovery to compare \textsc{ProxGen} and \textsc{Prox-SGD}, we generate simple Lasso simulations with problem dimension $p = 500$ and $n = 100$ data samples. The number of non-zero entries in true parameter vector $\theta^{*} \in \mathbb{R}^p$ is set to $10$. The design matrix $X \in \mathbb{R}^{n \times p}$ is generated from standard Gaussian distribution $\mathcal{N}(0, 1)$ and we randomly assign $+1$ or $-1$ for the non-zero value in true parameter at random 10 coordinates. The response variable $y \in \mathbb{R}^n$ is generated with small noise by $y = X\theta^{*} + \epsilon$ where $\epsilon \sim \mathcal{N}(0, 0.05^2)$. For both \textsc{ProxGen} and \textsc{Prox-SGD}, we employ \textsc{Adam} for preconditioner matrix $C_t$.

Under this setting, we simulate the support recovery using \textsc{ProxGen} and \textsc{Prox-SGD} with different two initialization methods: (i) random initialization and (ii) zero initialization. In Section \ref{sec:main}, we note that Prox-SGD have two potential caveats: (i) Prox-SGD might not achieve the exact zero signals and (ii) Prox-SGD might overestimate the sparsity level. In random initialization, it can be seen in Figure \ref{fig:lasso_random} that \textsc{Prox-SGD} could not achieve the exact zero value, which corroborates our first observation. To address this issue of \textsc{Prox-SGD}, we also conduct this simulation with \emph{zero} initialization. Interestingly, in this case, we can see in Figure \ref{fig:lasso_zero} that \textsc{Prox-SGD} shows zero signals for all coordinates, which is our second observation. This might be due to the fact that $\widehat \theta_t$ in \eqref{eqn:proxsgd_update} is always zero since the subproblem in \eqref{eqn:proxsgd_update} do not consider the learning rate, which might overestimate the sparsity level. Hence, the subsequent iterate $\theta_{t+1}$ would be always zero since we initialize the parameters with zero values. On the other hand, our \textsc{ProxGen} correctly recover the support in both cases.

\section{Details on Experimental Settings}

\paragraph{Sparse Neural Networks.} To reflect the most practical training settings, we first tune the weight-decay parameter $\zeta$ without $\ell_q$ regularizers. For weight-decay coefficients, we consider the candidates $\zeta \in \{0.001, 0.002, 0.005, 0.01, 0.02, 0.05, 0.1, 0.2, 0.5\}$ for $\zeta$ and the best $\zeta$ value is $0.2$ for both networks VGG-16 and ResNet-34 in our experience. After tuning weight-decay coefficient $\zeta$, we consider both decoupled weight decay \cite{loshchilov2018decoupled} and $\ell_q$ regularization whose detail update rule is described in Algorithm \ref{alg:proxgenw_alg}. For all comparison methods except $\ell_{0_{hc}}$, the recommended stepsize $\alpha_t = 0.001$ is employed, but we tune this stepsize for $\ell_{0_{hc}}$ baseline. We consider a broad range of regularization parameters for all methods: $\lambda \in \{0.001, 0.002, 0.005, 0.01, 0.02, \cdots, 1.0, 2.0, 5.0\}$. With these hyperparameter settings, we consider the total $300$ epochs and divide the learning rate at $150$-th and $250$-th epoch by $10$.

\paragraph{Binary Neural Networks.} In this experiment, we follow the same experimental settings in baseline \textsc{ProxQuant} \cite{bai2019proxquant}. We first pre-train ResNet-$\{$20, 32, 44, 56$\}$ with full-precision and initialize the network parameters with these pre-trained weights. Then, we consider the total $300$ epochs and hard-quantize the networks at $200$-th epoch (i.e. quantizing the weight parameters to $+1$ or $-1$). We employ the homotopy method introduced in \cite{bai2019proxquant}: annealing the regularization paramter $\lambda$ as $\lambda_{\text{epoch}} = \lambda \times \text{epoch}$. For initial value of $\lambda$, we use $\lambda = 10^{-8}$ or $\lambda = 5\cdot 10^{-8}$ for all ResNet architecture. We use the constant stepsize $\alpha_t = 0.01$ as recommended in \cite{bai2019proxquant}.

Here, we introduce preliminary results of revised \textsc{ProxQuant} $\ell_0$ on language modeling. For this experiment, we train one hidden layer LSTM with embedding dimension 300 and 300 hidden units according to \cite{bai2019proxquant}. First, we pre-train the full-precision LSTM and initialize the network with pre-trained weights. We consider the total 80 epochs and divide the learning rate by $1.2$ if the validation loss does not decrease. Table \ref{tab:lstm_quant_results} shows the preliminary results and revised \textsc{ProxQuant} $\ell_0$ is superior to the \textsc{ProxQuant} baseline in this task.

\begin{table}[b]
	\caption{Preliminary results on revised \textsc{ProxQuant} $\ell_0$ for LSTM models.}
	\centering
	\begin{tabular}{c|c}
		\toprule
		Algorithm & Test Perplexity \\
		\midrule
		Full-precision (32-bit) & 88.5 \\
		\midrule 
		BinaryConnect \cite{courbariaux2015binaryconnect} & 372.2 \\
		\textsc{ProxQuant} \cite{bai2019proxquant} & 288.5 \\
		\cmidrule{1-2}\morecmidrules\cmidrule{1-2}
		revised \textsc{ProxQuant} $\ell_0$ (Ours) & \textbf{223.4} \\
		\bottomrule
	\end{tabular}
	\label{tab:lstm_quant_results}
\end{table}

\section{Derivations for Proximal Mappings}
Here, we derive the concrete update rule for $\ell_q$ regularization with \emph{diagonal} preconditioners as introduced in Section \ref{sec:examples}. 

\paragraph{$\ell_{1/2}$ regularization.} First, we review the closed-form proximal mappings for $\ell_{1/2}$ regularization of vanilla \textsc{Sgd}. First, we consider the following one-dimensional program:
\begin{align}\label{eqn:vanilla_l12}
\widehat x = \argmin_x \{(x-z)^2 + \lambda |x|^{1/2}\}
\end{align}
For the program \eqref{eqn:vanilla_l12}, it is known that the closed-form solution exists \cite{cao2013fast} as
\begin{align}
\widehat x =
\begin{cases}
\frac{2}{3}|z| \Big(1 + \cos\big(\frac{2}{3}\pi - \frac{2}{3} \varphi_\lambda(z)\big)\Big) & \text{ if } z > p(\lambda) \\
0 & \text{ if } |z| \leq p(\lambda) \\
-\frac{2}{3}|z| \Big(1 + \cos\big(\frac{2}{3}\pi - \frac{2}{3} \varphi_\lambda(z)\big)\Big) & \text{ if } z < -p(\lambda)
\end{cases}
\end{align}
where $\varphi_\lambda(z) = \arccos\Big(\frac{\lambda}{8}\big(\frac{|z|}{3}\big)^{-3/2}\Big)$ and $p(\lambda) = \frac{\sqrt[3]{54}}{4}(\lambda)^{2/3}$. Based on this closed-form solution, we derive \textsc{ProxGen} for $\ell_{1/2}$ regularization with diagonal preconditioners. By \eqref{eqn:proxgen_update}, we have
\begin{align}
\widehat \theta_t & = \theta_t - \alpha_t (C_t + \delta I)^{-1} m_t \\
\theta_{t+1} & \in \mathrm{prox}_{\alpha_t \lambda \mathcal{R}(\cdot)}^{C_t + \delta I}(\widehat \theta_t) \\
& = \argmin\limits_{\theta} \Big\{\frac{1}{2}\|\theta - \widehat \theta_t\|_{C_t + \delta I}^2 + \lambda \sum\limits_{j=1}^{p} |\theta_j|^{1/2}\Big\} \label{eqn:l12_program}
\end{align}
Since the program \eqref{eqn:l12_program} is coordinate-wise decomposable (since the preconditioner matrix $C_t$ is diagonal), we can split \eqref{eqn:l12_program} into
\begin{align*}
\theta_{t+1, i} & = \argmin\limits_{\theta_i} \Big\{\frac{1}{2} (C_{t,i} + \delta) (\theta_i - \widehat \theta_{t,i})^2 + \alpha_t\lambda |\theta_i|^{1/2}\Big\} \\
& = \argmin\limits_{\theta_i} \Big\{(\theta_i - \widehat \theta_{t,i})^2 + \frac{2\alpha_t \lambda}{C_{t,i} + \delta} |\theta_i|^{1/2}\Big\}
\end{align*}
for the $i$-th coordinate. From \eqref{eqn:vanilla_l12}, we can derive
\begin{align*}
\theta_{t+1,i} = 
\begin{cases}
\frac{2}{3}|\widehat \theta_{t,i}| \Big(1 + \cos\big(\frac{2}{3}\pi - \frac{2}{3} \varphi_\lambda(\widehat \theta_{t,i})\big)\Big) & \text{ if } \widehat \theta_{t,i} > p(\lambda) \\
0 & \text{ if } |\widehat \theta_{t,i}| \leq p(\lambda) \\
-\frac{2}{3}|\widehat \theta_{t,i}| \Big(1 + \cos\big(\frac{2}{3}\pi - \frac{2}{3} \varphi_\lambda(\widehat \theta_{t,i})\big)\Big) & \text{ if } \widehat \theta_{t,i} < -p(\lambda)
\end{cases}
\end{align*}
where 
\begin{align*}
\varphi_\lambda(\widehat \theta_{t,i}) = \arccos\Big(\frac{\alpha_t\lambda}{4 (C_{t,i} + \delta)}\big(\frac{|\widehat \theta_{t,i}|}{3}\big)^{-3/2}\Big), \quad p(\lambda) = \frac{\sqrt[3]{54}}{4} \Big(\frac{2\alpha_t \lambda}{C_{t,i} + \delta}\Big)^{2/3}.
\end{align*}

\paragraph{$\ell_{2/3}$ regularization.} Now, we provide the closed-form solutions for proximal $\ell_{2/3}$ mappings with diagonal preconditioners. Similar to $\ell_{1/2}$ regularization, we start from the closed-form solutions of the following program:
\begin{align}\label{eqn:vanilla_l23}
\widehat x = \argmin_x \{(x-z)^2 + \lambda|x|^{2/3}\}
\end{align}
The closed-form solution for the program \eqref{eqn:vanilla_l23} is known to be
\begin{align}
\widehat x = 
\begin{cases}
\Bigg(\frac{|A| + \sqrt{\frac{2|z|}{|A|} - |A|^2}}{2}\Bigg)^3 & \text{ if } z > \frac{2}{3} \sqrt[4]{3\lambda^3} \\
0 & \text{ if } |z| \leq \frac{2}{3}\sqrt[4]{3\lambda^3} \\
-\Bigg(\frac{|A| + \sqrt{\frac{2|z|}{|A|} - |A|^2}}{2}\Bigg)^3 & \text{ if } z < -\frac{2}{3} \sqrt[4]{3\lambda^3}
\end{cases}
\end{align}
where
\begin{align}
|A| = \frac{2}{\sqrt{3}}\lambda^{1/4} \Big(\mathrm{cosh}\big(\frac{\phi}{3}\big)\Big)^{1/2}, \quad \phi = \mathrm{arccosh}\Big(\frac{27z^2}{16}\lambda^{-3/2}\Big)
\end{align}
Based on this formulation, we derive the closed-form proximal mappings with diagonal preconditioner $C_t$. By \eqref{eqn:proxgen_update}, we have
\begin{align}
\widehat \theta_t & = \theta_t - \alpha_t (C_t + \delta I)^{-1} m_t \\
\theta_{t+1} & \in \mathrm{prox}_{\alpha_t \lambda \mathcal{R}(\cdot)}^{C_t + \delta I}(\widehat \theta_t) \\
& = \argmin\limits_{\theta} \Big\{\frac{1}{2}\|\theta - \widehat \theta_t\|_{C_t + \delta I}^2 + \lambda \sum\limits_{j=1}^{p} |\theta_j|^{2/3}\Big\} \label{eqn:l23_program}
\end{align}
As in $\ell_{1/2}$ case, the program \eqref{eqn:l23_program} is coordinate-wise separable, so it suffices to solve the sub-problems for each coordinate as
\begin{align*}
\theta_{t+1, i} & = \argmin\limits_{\theta_i} \Big\{\frac{1}{2}(C_{t,i} + \delta) (\theta_i - \widehat \theta_i)^2 + \alpha_t \lambda |\theta_i|^{2/3}\Big\} \\
& = \argmin\limits_{\theta_i} \Big\{(\theta_i - \widehat \theta_{t,i})^2 + \frac{2\alpha_t \lambda}{C_{t,i} + \delta}|\theta_i|^{2/3}\Big\}
\end{align*}
From \eqref{eqn:vanilla_l23}, we can derive
\begin{align*}
\theta_{t+1, i} = 
\begin{cases}
\Bigg(\frac{|A| + \sqrt{\frac{2|\widehat \theta_{t,i}|}{|A|} - |A|^2}}{2}\Bigg)^3 & \text{ if } \widehat \theta_{t,i} > \frac{2}{3} \sqrt[4]{3\lambda^3} \\
0 & \text{ if } |\widehat \theta_{t,i}| \leq \frac{2}{3}\sqrt[4]{3\lambda^3} \\
-\Bigg(\frac{|A| + \sqrt{\frac{2|\widehat \theta_{t,i}|}{|A|} - |A|^2}}{2}\Bigg)^3 & \text{ if } \widehat \theta_{t,i} < -\frac{2}{3} \sqrt[4]{3\lambda^3}
\end{cases}
\end{align*}
where
\begin{align*}
|A| = \frac{2}{\sqrt{3}}\Big(\frac{2\alpha_t \lambda}{C_{t,i} + \delta}\Big)^{1/4} \Big(\mathrm{cosh}\big(\frac{\phi}{3}\big)\Big)^{1/2}, \quad \phi = \mathrm{arccosh}\Big(\frac{27\widehat \theta_{t,i}^2}{16}\Big(\frac{2\alpha_t \lambda}{C_{t,i} + \delta}\Big)^{-3/2}\Big)
\end{align*}

Although the derivations look little complicated for both cases, we emphasize that both two closed-form solutions can be efficiently implemented in a GPU-friendly manner.

\begin{figure}[t]
	\centering
	\subfigure[True parameter]{\includegraphics[width=0.32\linewidth]{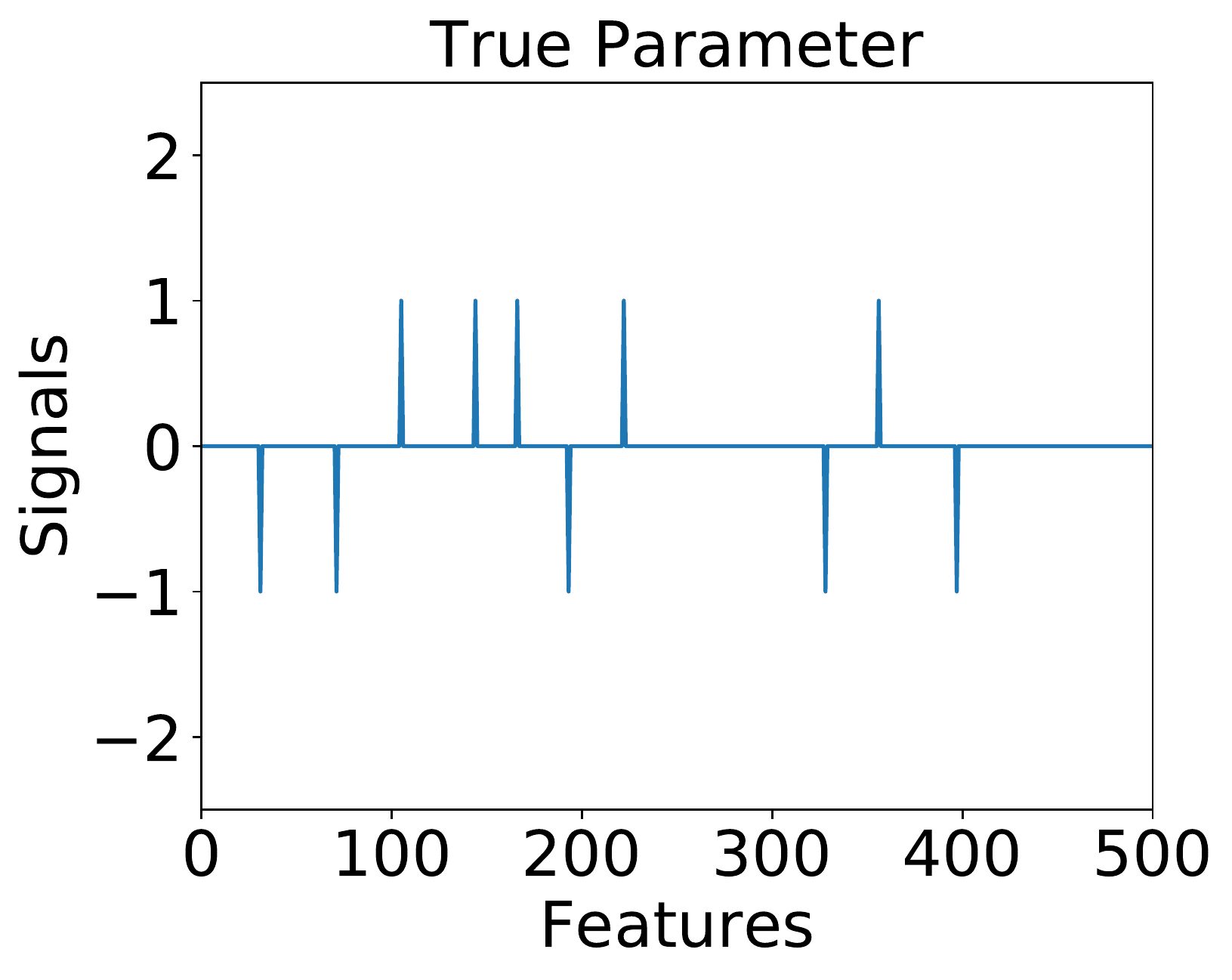}}
	\subfigure[\textsc{ProxGen}]{\includegraphics[width=0.32\linewidth]{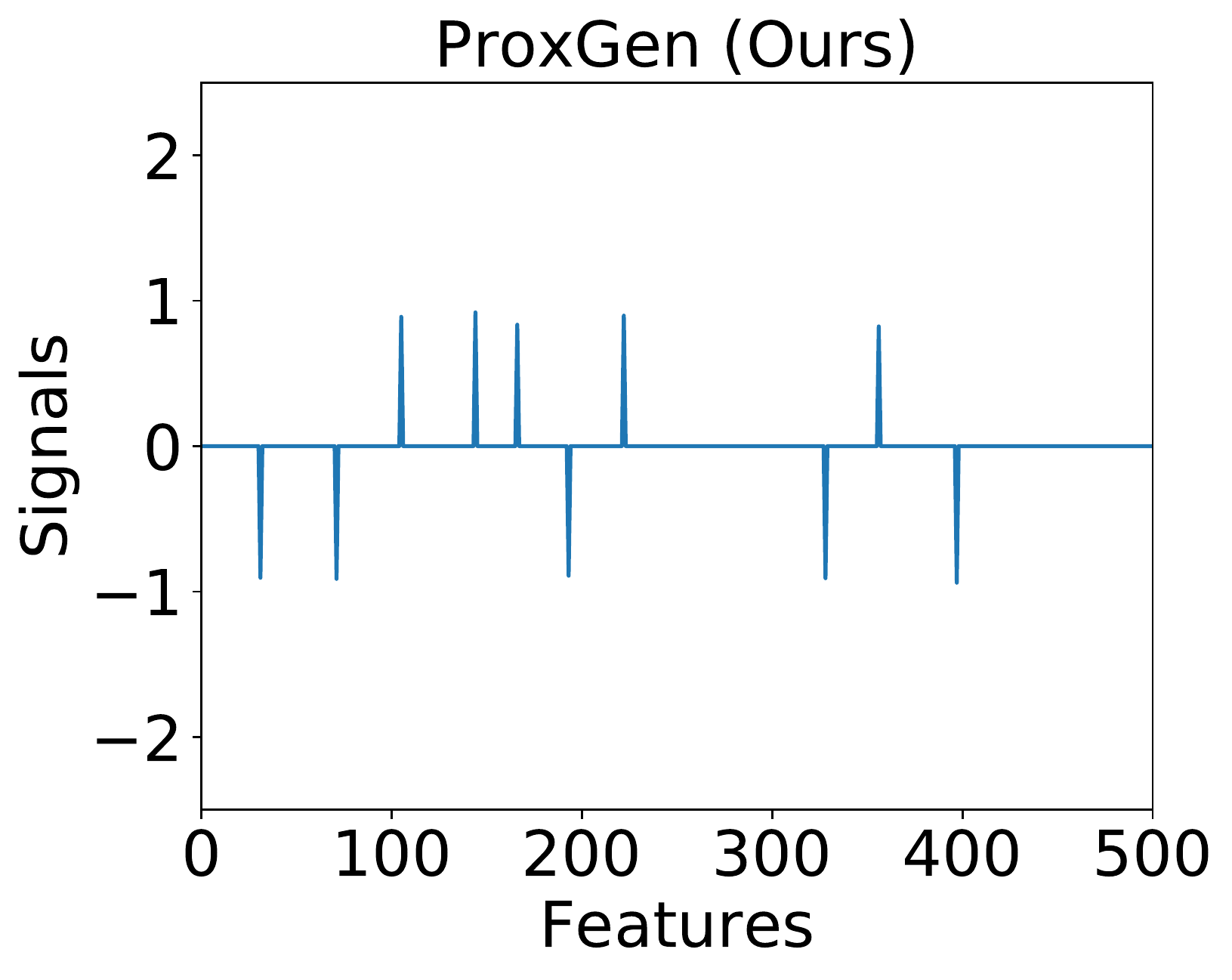}}
	\subfigure[\textsc{Prox-SGD}]{\includegraphics[width=0.32\linewidth]{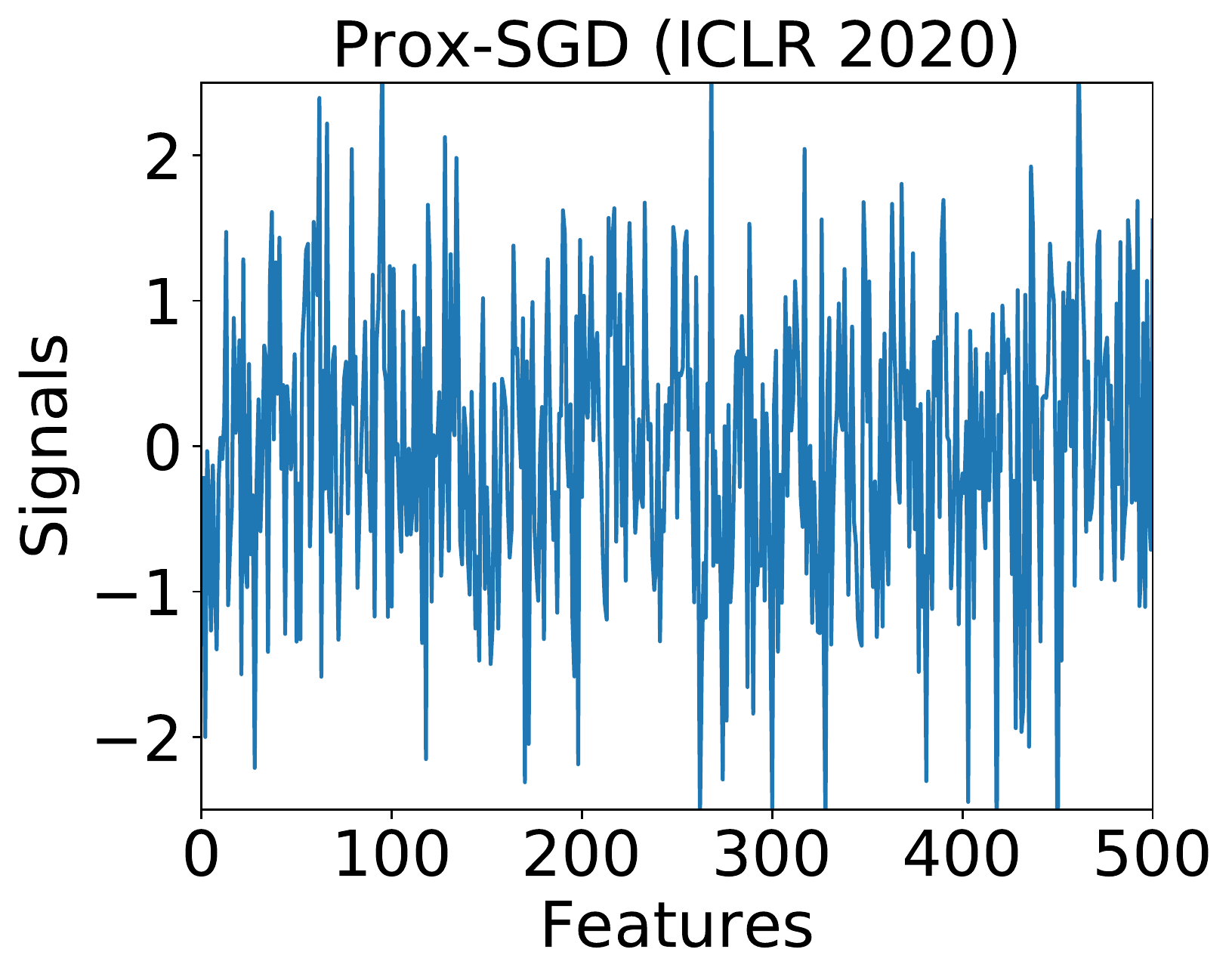}}
	\caption{Lasso simulations for support recovery with random initialization. \textsc{Prox-SGD} cannot recover the correct support.}
	\label{fig:lasso_random}
\end{figure}

\begin{figure}[t]
	\centering
	\subfigure[True parameter]{\includegraphics[width=0.32\linewidth]{figures/true_parameter.pdf}}
	\subfigure[\textsc{ProxGen}]{\includegraphics[width=0.32\linewidth]{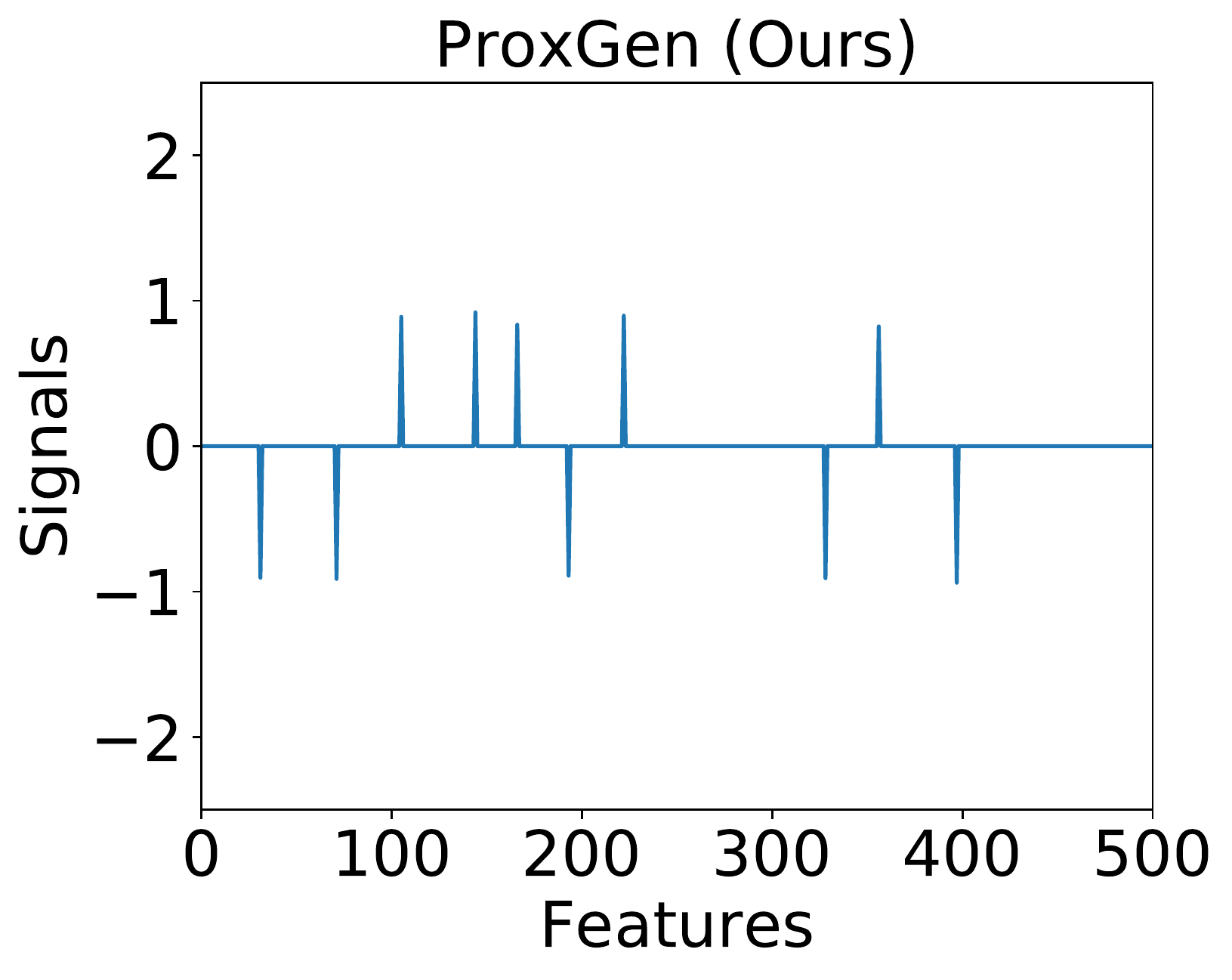}}
	\subfigure[\textsc{Prox-SGD}]{\includegraphics[width=0.32\linewidth]{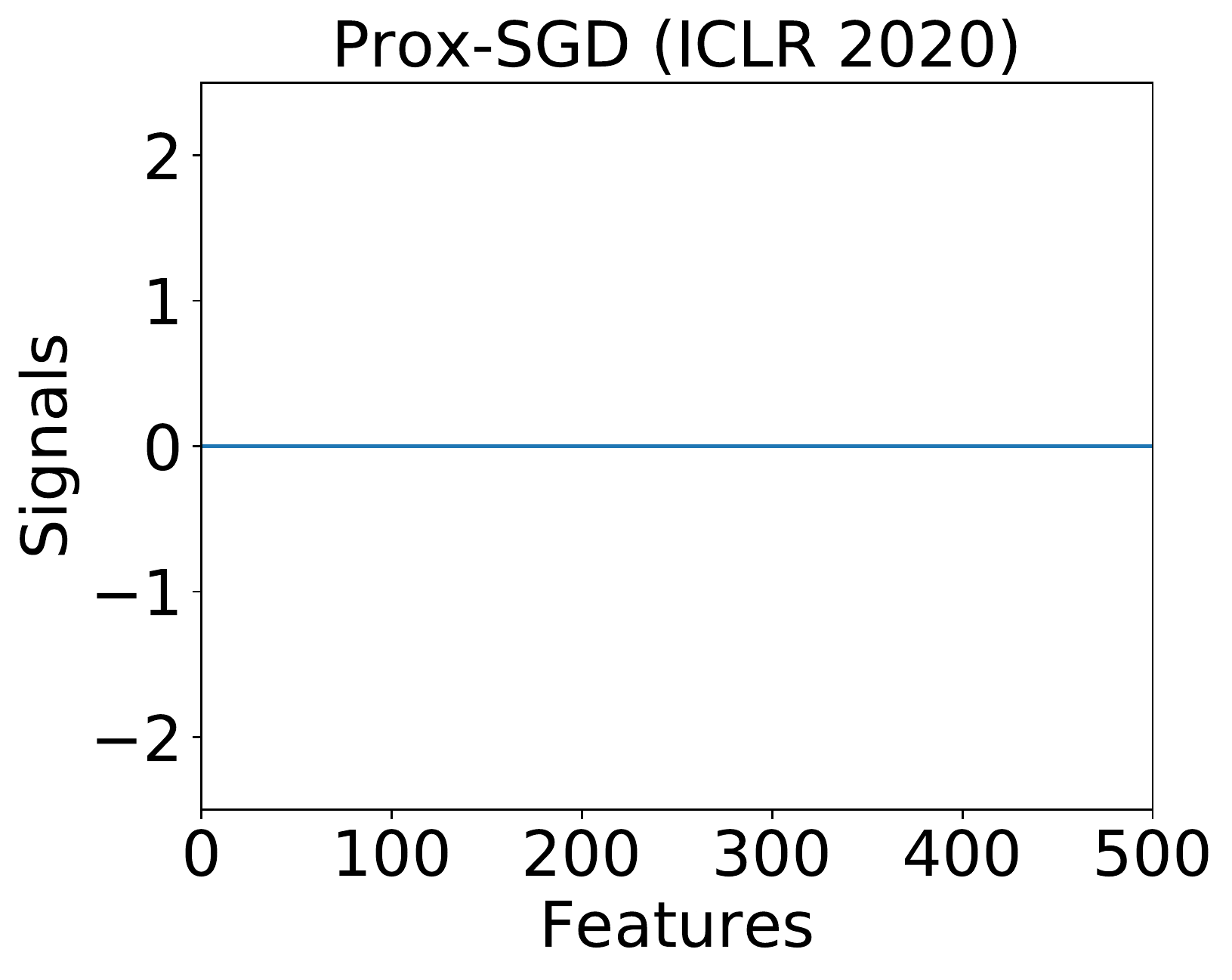}}
	\caption{Lasso simulations for support recovery with zero initialization. Note that \textsc{Prox-SGD} in this case overestimate the sparsity level at Equation \eqref{eqn:proxsgd_update}, so all the subsequently updated parameters $\theta_t$ also become zero.}
	\label{fig:lasso_zero}
\end{figure}

\begin{algorithm}[t]
	\caption{\textsc{ProxGenW}: A \textbf{Gen}eral Stochastic \textbf{Prox}imal Gradient Method with Weight Decay}
	\label{alg:proxgenw_alg}{
		\begin{algorithmic}[1]
			\State {\bfseries Input:} Stepsize $\alpha_t$, $\{\rho_t\}_{t=1}^{t=T} \in [0, 1)$, regularization parameter $\lambda$, small constant $0 < \delta <\!\!\!< 1$, and weight decay regularization parameter $\zeta$.
			\State {\bfseries Initialize:} $\theta_1 \in \mathbb{R}^{d}$, $m_0 = 0$, and $C_0 = 0$.
			\For{$t = 1, 2, \ldots, T$}
			\State{Draw a minibatch sample $\xi_t$ from $\mathbb{P}$}
			\Let{$g_t$}{$\nabla f(\theta_{t}; \xi_t)$} \Comment{Stochastic gradient at time $t$} 
			\Let{$m_t$}{$\rho_t m_{t-1} + (1 - \rho_t) g_t$} \Comment{First-order momentum estimate}\vspace{+0.1cm}
			\Let{$C_t$}{Preconditioner construction}
			\Let{$\bar{\theta}_t$}{$(1 - \alpha_t \zeta) \theta_t$} \Comment{Apply decoupled weight decay}
			\State{$\theta_{t+1} \in \argmin\limits_{\theta \in \Omega}\Big\{\langle m_t, \theta \rangle + \lambda \mathcal{R}(\theta) + \dfrac{1}{2\alpha_t}(\theta - \bar{\theta}_{t})^\mathsf{T} \big(C_t + \delta I) (\theta - \bar{\theta}_{t})\Big\}$}
			\EndFor
			\State {\bfseries Output:} $\theta_T$
	\end{algorithmic}}
\end{algorithm}

\section{Examples Satisfying Condition \ref{con:mineig}}

\begin{theorem}[Weyl]\label{thm:weyl}
	For any two $n \times n$ Hermitian matrices $A$ and $B$, assume that the eigenvalues of $A$ and $B$ are
	\begin{align*}
	\mu_1 \geq \cdots \geq \mu_n, \quad \text{ and } \quad \nu_1 \geq \cdots \geq \nu_n
	\end{align*}
	respectively. Let $\lambda_1 \geq \cdots \geq \lambda_n$ be the eigenvalues of the matrix $A + B$, then the following holds
	\begin{align*}
	\mu_j + \nu_k \leq \lambda_i \leq \mu_r + \nu_s
	\end{align*}
	for $j + k -n \geq i \geq r + s - 1$. Hence, we could derive
	\begin{align*}
	\lambda_1 \leq \mu_1 + \nu_1
	\end{align*}
\end{theorem}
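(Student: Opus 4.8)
The plan is to prove Weyl's inequalities through the Courant--Fischer variational characterization of eigenvalues combined with an elementary dimension-counting argument. Recall that for a Hermitian matrix $M$ with eigenvalues ordered decreasingly, the Rayleigh quotient $x^* M x / x^* x$ is bounded above by the top eigenvalue of any eigenspace spanned by a contiguous upper block of eigenvectors, and bounded below on a contiguous lower block. I would exploit this one-sided control on three carefully chosen subspaces simultaneously, so that a single common vector pins down all three matrices at once.

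First I would fix orthonormal eigenbases: let $u_1,\ldots,u_n$ be eigenvectors of $A$ (with $Au_\ell = \mu_\ell u_\ell$), $v_1,\ldots,v_n$ eigenvectors of $B$, and $w_1,\ldots,w_n$ eigenvectors of $A+B$. For the upper bound $\lambda_i \le \mu_r + \nu_s$, I would introduce
\[
U_r = \mathrm{span}(u_r,\ldots,u_n), \quad W_s = \mathrm{span}(v_s,\ldots,v_n), \quad Z_i = \mathrm{span}(w_1,\ldots,w_i),
\]
of dimensions $n-r+1$, $n-s+1$, and $i$. Every $x \in U_r$ satisfies $x^*Ax \le \mu_r\|x\|^2$, every $x \in W_s$ satisfies $x^*Bx \le \nu_s\|x\|^2$, and every $x \in Z_i$ satisfies $x^*(A+B)x \ge \lambda_i\|x\|^2$, each obtained by expanding $x$ in the relevant eigenbasis.

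The crux is a dimension count: applying $\dim(S\cap T)\ge \dim S + \dim T - n$ twice shows that in $\mathbb{C}^n$ the intersection of three subspaces has dimension at least the sum of their dimensions minus $2n$, so $\dim(U_r \cap W_s \cap Z_i) \ge i - r - s + 2$, which is positive precisely when $i \ge r+s-1$. Hence under that index condition there is a nonzero $x \in U_r \cap W_s \cap Z_i$, and for this single vector I would chain the three Rayleigh-quotient bounds:
\[
\lambda_i\|x\|^2 \le x^*(A+B)x = x^*Ax + x^*Bx \le (\mu_r + \nu_s)\|x\|^2,
\]
yielding $\lambda_i \le \mu_r + \nu_s$. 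Specializing to $i=r=s=1$ (where $i\ge r+s-1$ holds trivially) gives the stated consequence $\lambda_1 \le \mu_1 + \nu_1$.

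Finally, the lower bound $\mu_j + \nu_k \le \lambda_i$ under $j+k-n \ge i$ I would obtain for free by applying the already-proved upper bound to the pair $(-A,-B)$, whose decreasingly ordered eigenvalues are $-\mu_{n-\ell+1}$ and $-\nu_{n-\ell+1}$, and whose sum $-(A+B)$ has eigenvalues $-\lambda_{n-\ell+1}$; the reflection $\ell \mapsto n-\ell+1$ then converts the upper inequality, together with its index constraint, into exactly the lower one. I expect the only delicate part to be this index bookkeeping in the reflection step and keeping the orientation of each Rayleigh-quotient inequality straight (top eigenspace for $A+B$, bottom eigenspaces for $A$ and $B$); once the subspaces are chosen, the dimension count and the arithmetic are entirely routine.
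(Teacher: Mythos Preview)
Your argument is correct and is the standard proof of Weyl's inequalities via the Courant--Fischer min--max principle plus the dimension inequality $\dim(S\cap T)\ge\dim S+\dim T-n$ iterated once. The Rayleigh-quotient bounds on $U_r$, $W_s$, and $Z_i$ are exactly right, the dimension count $i-r-s+2\ge 1$ precisely reproduces the constraint $i\ge r+s-1$, and the reflection $\ell\mapsto n-\ell+1$ applied to $(-A,-B)$ cleanly delivers the lower bound with the dual constraint $j+k-n\ge i$.

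There is nothing to compare against in the paper: the paper simply \emph{states} Weyl's theorem as a classical fact and then invokes the consequence $\lambda_1\le\mu_1+\nu_1$ to bound $\lambda_{\max}(C_t)$ in the examples verifying Condition~\ref{con:mineig}. No proof is given there. So your proposal supplies what the paper omits, and does so by the textbook route.
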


We provide concrete examples and derivations satisfying Condition \ref{con:mineig} in Section \ref{sec:analysis}.

\paragraph{Vanilla \textsc{Sgd}.} The vanilla \textsc{Sgd} corresponds to $C_t = I$. We assume the constant stepsize $\alpha_t = \alpha$. Then, the condition \ref{con:mineig} can be computed as
\begin{align*}
\lambda_{\mathrm{min}}(\alpha_t (C_t + \delta I)^{-1}) = \lambda_{\mathrm{min}}(\alpha \frac{1}{\delta + 1}I) = \frac{\alpha}{\delta + 1}
\end{align*}
Therefore, we conclude that $\gamma = \frac{\alpha}{\delta + 1}$.

\paragraph{\textsc{AdaGrad}.} In \textsc{ProxGen} framework, \textsc{AdaGrad} corresponds to $C_t = \Big(\frac{1}{t}\sum\limits_{\tau=1}^{t} g_\tau g_\tau^\mathsf{T}\Big)^{1/2}$. Under the constant stepsizes $\alpha_t = \alpha $, we have
\begin{align*}
\lambda_{\mathrm{max}}(C_t) & = \frac{1}{\sqrt{t}}\lambda_{\mathrm{max}}\Big(\sum\limits_{\tau=1}^{t} g_\tau g_\tau^\mathsf{T}\Big)^{1/2} \\
& \leq \frac{1}{\sqrt{t}} \Big(\sum\limits_{\tau=1}^{t} \lambda_{\mathrm{max}}(g_\tau g_\tau^\mathsf{T})\Big)^{1/2} \\
& = \frac{1}{\sqrt{t}} \Big(\sum\limits_{\tau=1}^{t} \|g_\tau\|_2^2\Big)^{1/2} \\
& \leq G
\end{align*}
Hence, the Condition \ref{con:mineig} can be satisfied as
\begin{align*}
\lambda_{\mathrm{min}}(\alpha_t (C_t + \delta I)^{-1}) \geq \frac{\alpha}{G + \delta} \coloneqq \gamma
\end{align*}

\paragraph{\textsc{RMSprop} and \textsc{Adam}.} Exponential moving average (a.k.a. EMA) approaches correspond to $C_t = \big(\beta C_{t-1} + (1 - \beta) g_t g_t^\mathsf{T}\big)^{1/2}$ where $\beta \in [0, 1)$ and $g_t$ denotes the stochastic gradient at time $t$. The usual \textsc{RMSprop} and \textsc{Adam} use diagonal approximations for $g_t g_t^\mathsf{T}$, but here we consider more general form (i.e. including general full matrix gradient outer-product) as introduce in \cite{yun2019stochastic}. First, we derive the upper bound for maximum eigenvalue for the matrix $C_t$. The matrix $C_t$ can be expressed by
\begin{align*}
C_t & = \big(\beta C_{t-1} + (1 - \beta) g_t g_t^\mathsf{T}\big)^{1/2} \\
& = \big(\beta^2 C_{t-2} + \beta (1 - \beta) g_{t-1} g_{t-1}^\mathsf{T} + (1 - \beta) g_t g_t^\mathsf{T}\big)^{1/2} \\
& = \cdots \\
& = \Big((1 - \beta) \sum\limits_{i=1}^{t} \beta^{t-i} g_i g_i^\mathsf{T}\Big)^{1/2}
\end{align*}
We can derive the upper bound for maximum eigenvalue of $C_t$ using Weyl's theorem (Theorem \ref{thm:weyl}) by
\begin{align*}
\lambda_{\mathrm{max}}(C_t) & = \lambda_{\mathrm{max}}\Big((1 - \beta) \sum\limits_{i=1}^{t} \beta^{t-i} g_i g_i^\mathsf{T} \Big)^{1/2} \\
& \leq \Big((1 - \beta)\sum\limits_{i=1}^{t} \beta^{t-i} \lambda_{\mathrm{max}}(g_i g_i^\mathsf{T})\Big)^{1/2} \\
& \leq \Big((1 - \beta) G^2 \sum\limits_{i=1}^{t} \beta^{t-i}\Big)^{1/2} \\
& \leq G (1 - \beta^t)^{1/2} \leq G
\end{align*}
Hence, we have $\lambda_{\mathrm{max}}(C_t + \delta I) \leq G + \delta$. Also, we have
\begin{align*}
\lambda_{\mathrm{max}}\big(C_t + \delta I\big) = \frac{1}{\lambda_{\mathrm{min}}\big((C_t + \delta I)^{-1}\big)} \leq \frac{1}{G + \delta}
\end{align*}
Therefore, the condition \ref{con:mineig} under the constant stepsize $\alpha_t = \alpha$ can be derived as
\begin{align*}
\lambda_{\mathrm{min}}\big(\alpha_t (C_t + \delta I)^{-1}\big) \geq \frac{\alpha}{G + \delta}
\end{align*}
which yields $\gamma = \frac{\alpha}{G + \delta}$.

\paragraph{Natural Gradient Descent.} In this case, we derive the condition  \ref{con:mineig} for the Fisher information matrix when the loss function is defined as a negative log-likelihood, i.e., $f = \log p(x|\theta)$. The natural gradient descent aims at considering general geometry (not limited to Euclidean geometry), but we restrict our focus on the distribution space where the Fisher information is employed for preconditioner matrix $C_t$. The Fisher information matrix is defined as
\begin{align*}
F = \mathbb{E}_{Q(x) P(y|x,\theta)}\Big[\frac{\partial f(x|\theta)}{\partial \theta} \frac{\partial f(x|\theta)}{\partial \theta}^\mathsf{T} \Big]
\end{align*}
where $Q(x)$ is data distribution and $P(y|x,\theta)$ denotes the model's predictive distribution (ex. neural networks). However, in general, we do not have access to true data distribution, so we instead take an expectation with respect to empirical (training) data distribution $\widehat Q(x)$. This trick is also employed for K-FAC approximations to the Fisher \cite{martens2015optimizing}. Let the training samples be $\mathcal{S} = \{x_1, \cdots, x_n\}$ with sample size $n$. Then, the empirical Fisher could be computed as
\begin{align*}
\widehat F & = \mathbb{E}_{\widehat Q(x) P(y|x,\theta)}\Big[\frac{\partial f(x|\theta)}{\partial \theta} \frac{\partial f(x|\theta)}{\partial \theta}^\mathsf{T}\Big] \\ 
& = \frac{1}{n} \sum\limits_{i=1}^{n} \frac{\partial f(x_i|\theta)}{\partial \theta} \frac{\partial f(x_i|\theta)}{\partial \theta}^\mathsf{T} 
\end{align*}
Now, we bound the maximum eigenvalue of $\widehat F$ as
\begin{align*}
\lambda_{\mathrm{max}}(\widehat F) & = \frac{1}{n} \sum\limits_{i=1}^{t} \lambda_{\mathrm{max}}\big(\frac{\partial f(x_i|\theta)}{\partial \theta} \frac{\partial f(x_i|\theta)}{\partial \theta}^\mathsf{T}\big) \\
& \leq \frac{1}{n} \sum\limits_{i=1}^{t} G^2 \\
& = G^2
\end{align*}
by our Condition \ref{con:mild}. Hence, the Condition \ref{con:mineig} can be derived as
\begin{align*}
\lambda_{\mathrm{min}}\big(\alpha_t (\widehat F + \delta I)^{-1}\big) \geq \frac{\alpha}{G^2 + \delta}
\end{align*}
under the constant stepsize $\alpha_t = \alpha$.

\section{Proofs of Theorem 1}
\begin{lemma}\label{lem:momentum_bound}
	The first-order momentum $m_t$ in Algorithm \ref{alg:proxgen_alg} satisfies
	\begin{align*}
	\|m_t\|_2 \leq G
	\end{align*}
\end{lemma}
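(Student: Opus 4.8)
\textbf{Proof proposal for Lemma~\ref{lem:momentum_bound}.}
The plan is to prove the bound $\|m_t\|_2 \le G$ by induction on $t$, exploiting the fact that $m_t$ is a convex combination of the current stochastic gradient $g_t$ and the previous momentum $m_{t-1}$, together with the uniform gradient bound $\|g_t\| \le G$ from Condition~\ref{con:mild}(ii). First I would establish the base case: at $t=1$, since $m_0 = 0$, the update $m_1 = \rho_1 m_0 + (1-\rho_1) g_1 = (1-\rho_1) g_1$, so $\|m_1\| = (1-\rho_1)\|g_1\| \le (1-\rho_1) G \le G$ because $\rho_1 \in [0,1)$.

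For the inductive step, assume $\|m_{t-1}\| \le G$. Then by the triangle inequality applied to $m_t = \rho_t m_{t-1} + (1-\rho_t) g_t$,
\begin{align*}
\|m_t\|_2 \le \rho_t \|m_{t-1}\|_2 + (1-\rho_t)\|g_t\|_2 \le \rho_t G + (1-\rho_t) G = G,
\end{align*}
where the first inequality uses $\rho_t \in [0,1)$ so that both coefficients are nonnegative, the second uses the inductive hypothesis and Condition~\ref{con:mild}(ii). This closes the induction and yields $\|m_t\|_2 \le G$ for all $t$.

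There is essentially no hard part here: the argument is a one-line convexity/triangle-inequality estimate wrapped in an induction, and it only requires that $\{\rho_t\} \subset [0,1)$ (which holds by the algorithm's input specification) and the bounded-gradient assumption~\ref{con:mild}(ii). The only thing to be slightly careful about is the initialization $m_0 = 0$, which makes the base case trivial; alternatively one could unroll the recursion to write $m_t = (1-\rho_t)\sum_{\tau=1}^{t} \big(\prod_{s=\tau+1}^{t}\rho_s\big)(1-\rho_\tau)^{-1}\cdots$ but the induction is cleaner and avoids bookkeeping with the telescoping product of momentum parameters.
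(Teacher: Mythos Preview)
Your proposal is correct and follows essentially the same approach as the paper: induction on $t$, with the base case handled via $m_0=0$ and the inductive step via the triangle inequality applied to the convex combination $m_t=\rho_t m_{t-1}+(1-\rho_t)g_t$ together with Condition~\ref{con:mild}(ii). There is nothing to add.
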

\begin{proof}
	We use mathematical induction. For $t = 1$, the momentum is computed as $m_1 = \rho_1 m_0 + (1 - \rho_1) g_t = (1 - \rho_0) g_1$. Therefore, we have $\|m_t\|_2 = \|(1 - \rho_0) g_1\| \leq (1 - \rho_0) G \leq G$.
	
	Now, we assume that $\|m_{t-1}\|_2 \leq G$ holds. The momentum at time $t$ is constructed by $m_t = (1 - \rho_t) m_{t-1} + \rho_t g_t$. Then, we have
	\begin{align*}
	\|m_t\|_2 & = \|(1 - \rho_t) m_{t-1} + \rho_t g_t\|_2 \\
	& \leq (1 - \rho_t)\|m_{t-1}\|_2 + \rho_t \|g_t\|_2 \\
	& \leq (1 - \rho_t) G + \rho_t G = G 
	\end{align*}
	where the first inequality comes from the triangle inequality and the second one is derived from the induction hypothesis.
\end{proof}
%\textcolor{blue}{TODO (Jihun): arranging all the items and matching the parameters with main text.}
%Fisrt, we re-experess the momentum term before the proximal mappings.
%\begin{align*}
%	\theta_t - \alpha_t (C_t + \delta I)^{-1} m_t & = \theta_t - \alpha_t (C_t + \delta I)^{-1} (\rho_t m_{t-1} + (1 - \rho_t)g_t) \\
%	& = \theta_t - \alpha_t \rho_t (C_t + \delta I)^{-1} m_{t-1} - \alpha_t (1 - \rho_t) (C_t+ \delta I)^{-1} g_t \\
%	& = \theta_t - \alpha_t (1 - \rho_t)(C_t + \delta I)^{-1} g_t + \frac{\alpha_t}{\alpha_{t-1}}\rho_t (C_t + \delta I)^{-1} (C_{t-1} + \delta I) (\theta_t - \theta_{t-1}) \\
%	& = \theta_t - \alpha_t (C_t + \delta I)^{-1} \Big((1 - \rho_t)g_t - \underbrace{\frac{\rho_t}{\alpha_{t-1}} (C_{t-1} + \delta I) (\theta_t - \theta_{t-1})}_{s_t}\Big) \\
%	& = \theta_t - \alpha_t (C_t + \delta I)^{-1} \Big((1 - \rho_t)g_t - s_t\Big) 
%\end{align*}
We deal with the following update rule in Algorithm  \ref{alg:proxgen_alg} as
\begin{align}\label{eqn:update_def}
\theta_{t+1} \in \argmin\limits_{\theta \in \Omega} \Big\{\big\langle (1 - \rho_t) g_t + \rho_t m_{t-1}, \theta \big\rangle + \mathcal{R}(\theta) + \frac{1}{2\alpha_t} (\theta - \theta_t)^\mathsf{T} (C_t + \delta I) (\theta - \theta_t) \Big\}
\end{align}
By the optimality condition, we have
\begin{align*}
0 \in (1 - \rho_t) g_t + \rho_t m_{t-1} + \widehat{\partial}\mathcal{R}(\theta_{t+1}) + \frac{1}{\alpha_t} (C_t + \delta I) (\theta_{t+1} - \theta_t)
\end{align*}
which means that
\begin{align*}
-(1 - \rho_t) g_t - \rho_t m_{t-1} - \frac{1}{\alpha_t} (C_t + \delta I)  (\theta_{t+1} - \theta_t) \in \widehat{\partial} \mathcal{R}(\theta_{t+1}) 
\end{align*}
By adding the gradient $\nabla f(\theta_{t+1})$ on both sides, we have
\begin{align*}
\nabla f(\theta_{t+1}) - (1 - \rho_t) g_t - \rho_t m_{t-1} - \frac{1}{\alpha_t} (C_t + \delta I)(\theta_{t+1} - \theta_t) \in \nabla f(\theta_{t+1}) + \widehat{\partial}\mathcal{R}(\theta_{t+1}) = \widehat{\partial} F(\theta_{t+1})
\end{align*}
By the definition of $\theta_{t+1}$ in \eqref{eqn:update_def}, we obtain
\begin{align*}
& \big\langle (1 - \rho_t) g_t + \rho_t m_{t-1}, \theta_{t+1} \big\rangle + \mathcal{R}(\theta_{t+1}) + \frac{1}{2\alpha_t}(\theta_{t+1} - \theta_t)^\mathsf{T} (C_t + \delta I) (\theta_{t+1} - \theta_t) \\
\leq~ & \big\langle (1 - \rho_t)g_t + \rho_t m_{t-1}, \theta_t \big\rangle + \mathcal{R}(\theta_t)
\end{align*}
which in result
\begin{align*}
\big\langle (1 - \rho_t)g_t + \rho_t m_{t-1}, \theta_{t+1} - \theta_t \big\rangle + \mathcal{R}(\theta_{t+1}) + \frac{1}{2\alpha_t} (\theta_{t+1} - \theta_t)^\mathsf{T} (C_t + \delta I) (\theta_{t+1} - \theta_t) \leq \mathcal{R}(\theta_t) 
\end{align*}
Since the function $f$ is $L$-smooth by Condition \ref{con:smooth}, we have
\begin{align*}
f(\theta_{t+1}) \leq f(\theta_t) + \langle \nabla f(\theta_t), \theta_{t+1} - \theta_t \rangle + \frac{L}{2} \|\theta_{t+1} - \theta_t \|_2^2
\end{align*}
Adding previous two inequalities yields
\begin{align}\label{eqn:core}
& \big\langle (1 - \rho_t) g_t - \nabla f(\theta_t) + \rho_t m_{t-1}, \theta_{t+1} - \theta_t \big\rangle + (\theta_{t+1} - \theta_t)^\mathsf{T} \Bigg(\frac{1}{2\alpha_t} (C_t + \delta I) - \frac{L}{2} I\Bigg) (\theta_{t+1} - \theta_t) \nonumber \\
\leq~& F(\theta_t) - F(\theta_{t+1})
\end{align}
Then, we have
\begin{align*}
& \|\theta_{t+1} - \theta_t\|_{\frac{1}{2\alpha_t}(C_t + \delta I) - \frac{L}{2}I}^2 \\
\overset{\textcircled{1}}{\leq}~& F(\theta_t) - F(\theta_{t+1}) - \big\langle (1 - \rho_t)g_t - \nabla f(\theta_t), \theta_{t+1} - \theta_t \big\rangle - \big\langle \rho_t m_{t-1}, \theta_{t+1} - \theta_t \big\rangle \\
=~& F(\theta_t) - F(\theta_{t+1}) - \big\langle g_t - \nabla f(\theta_t), \theta_{t+1} - \theta_t \big\rangle + \langle \rho_t g_t, \theta_{t+1} - \theta_t \rangle - \langle \rho_t m_{t-1}, \theta_{t+1} - \theta_t \rangle \\
\overset{\textcircled{2}}{\leq}~& F(\theta_t) - F(\theta_{t+1}) + \frac{1}{2L} \|g_t - \nabla f(\theta_t)\|_2^2 + \frac{L}{2} \|\theta_{t+1} - \theta_t\|_2^2 + \frac{\rho_t^2}{2L}\|g_t\|_2^2 + \frac{L}{2} \|\theta_{t+1} - \theta_t\|_2^2 \\
& \quad + \|\rho_t m_{t-1}\|_2 \|\theta_{t+1} - \theta_t\|_2 \\ %\frac{\rho_t\lambda_{\mathrm{max}}(\alpha_{t-1}^{-1}V_{t-1}^{1/2})}{2L}\|\theta_t - \theta_{t-1}\|_2 \|\theta_{t+1} - \theta_t\|_2 \\
%& \leq F(\theta_t) - F(\theta_{t+1}) + \frac{1}{2L} \|g_t - \nabla f(\theta_t)\|_2^2 + \frac{L}{2} \|\theta_{t+1} - \theta_t\|_2^2 + \frac{\rho_t^2}{2L}\|g_t\|_2^2 + \frac{L}{2} \|\theta_{t+1} - \theta_t\|_2^2 \\
%& \quad + \frac{\rho_t}{2L\lambda_{\mathrm{min}}(\alpha_{t-1} V_{t-1}^{-1/2})}\|\theta_t - \theta_{t-1}\|_2 \|\theta_{t+1} - \theta_t\|_2 \\ 
\overset{\textcircled{3}}{\leq}~& F(\theta_t) - F(\theta_{t+1}) + \rho_0 \mu^{t-1} DG + %\frac{\rho_0 \lambda^{t-1}D^2}{2\gamma L} +
\frac{\rho_0^2 \mu^{2(t-1)}G^2}{2L} + L \|\theta_{t+1} - \theta_t\|_2^2   + \frac{1}{2L} \|g_t - \nabla f(\theta_t)\|_2^2
\end{align*}
%where $\gamma = \min\limits_{t\in[T]} \lambda_{\mathrm{min}}(\alpha_t V_t^{-1/2})$. 
The derivations in inequalities (1-3) as follows:
\begin{enumerate}[label=\large\protect\textcircled{\small\arabic*}]
	\item We rearrange the inequality \eqref{eqn:core}.
	\item We use the fact that $\langle a, b \rangle \leq \frac{1}{2}\|a\|_2^2 + \frac{1}{2}\|b\|_2^2$ and $\langle a, b \rangle \leq \|a\|_2 \|b\|_2$. With this, we use modified version such as $\langle a, b \rangle = \langle ca, \frac{1}{c}b \rangle \leq c^2 \|a\|_2^2 + \frac{1}{c^2} \|b\|_2^2$ for any positive constant $c$. 
	\item We apply our Lemma \ref{lem:momentum_bound} and Condition \ref{con:mild}.
\end{enumerate}
By rearranging the above inequality, we require the following quantity be positive-semidefinite.
\begin{align*}
\frac{1}{2\alpha_t} (C_t + \delta I) - \frac{3}{2}L I \succeq 0
\end{align*}
Note that in this inequality we can see that
\begin{align*}
\frac{1}{2\alpha_t} (C_t + \delta I) - \frac{3}{2} LI \succeq \frac{1}{2\alpha_0} \delta I - \frac{3}{2} LI
\end{align*}
since $C_t$ is positive (semi)definite and $\alpha_t$ is \emph{non-increasing}. Therefore, from this we can derive the stepsize condition in our Theorem \ref{thm:general_convergence} as
\begin{align*}
\alpha_0 \leq \frac{\delta}{3L}
\end{align*}
Therefore, we have
\begin{align*}
\sum\limits_{t=0}^{T-1} \|\theta_{t+1} - \theta_t\|_{\frac{1}{2\alpha_t} (C_t + \delta I) - \frac{3}{2}L I}^2 & \leq \underbrace{F(\theta_0) - F(\theta^{*})}_{\Delta} + \underbrace{\frac{\rho_0DG}{1 - \mu} + \frac{\rho_0^2 G^2}{2L(1 - \mu^2)}}_{C_1} + \frac{1}{2L} \sum\limits_{t=0}^{T-1} \|g_t - \nabla f(\theta_t)\|_2^2 \\%\frac{\rho_0 D^2}{2\gamma L (1 - \lambda)} + \frac{\rho_0^2 G^2}{2L (1 - \lambda^2)} + \frac{1}{2L}\sum\limits_{t=0}^{T-1} \|g_t - \nabla f(\theta_t)\|_2^2 \\
& \leq \Delta + C_1 + \frac{1}{2L} \sum\limits_{t=0}^{T-1} \|g_t - \nabla f(\theta_t)\|_2^2 
\end{align*}
Furthermore, we also have by stepsize condition
\begin{align*}
\Big(\frac{\delta}{2\alpha_0} - \frac{3}{2}L\Big) \sum\limits_{t=0}^{T-1} \|\theta_{t+1} - \theta_t\|_2^2 \leq \sum\limits_{t=0}^{T-1} \|\theta_{t+1} - \theta_t\|_{\frac{1}{2\alpha_t} (C_t + \delta I) - \frac{3}{2}L I}^2 \leq \Delta + C_1 + \frac{1}{2L} \sum\limits_{t=0}^{T-1} \|g_t - \nabla f(\theta_t)\|_2^2 
\end{align*}
since $\delta I \preceq C_t + \delta I$. From above inequality, we obtain
\begin{align}\label{eqn:distance_bound}
\sum\limits_{t=0}^{T-1} \|\theta_{t+1} - \theta_t\|_2^2 \leq H_1 + H_2 \sum\limits_{t=0}^{T-1}\|g_t - \nabla f(\theta_t)\|_2^2
\end{align}
where the constants $H_1$ and $H_2$ are defined as
\begin{align*}
H_1 & = \Delta \Bigg/ \Big(\frac{\delta}{2\alpha_0} - \frac{3}{2}L\Big) + C_1 \Bigg/ \Big(\frac{\delta}{2\alpha_0} - \frac{3}{2}L\Big) \\
H_2 & = \frac{1}{2L(\frac{\delta}{2\alpha_0} - \frac{3}{2}L)}
\end{align*}
Our goal is to bound the distance between the zero vector and subdifferential set of $F$, so we have
\begin{align*}
& \mathrm{dist}(\bm{0}, \widehat{\partial}F(\theta_{t+1}))^2 \\
=~& \Big\|(1 - \rho_t)g_t - \nabla f(\theta_{t+1}) + \rho_t m_{t-1} + \frac{1}{\alpha_t} (C_t + \delta I)(\theta_{t+1} - \theta_t)\Big\|_2^2 \\
=~& \Big\|(1 - \rho_t)g_t - \nabla f(\theta_{t+1}) + \rho_t m_{t-1} + (\theta_{t+1} - \theta_t) + \frac{1}{\alpha_t} (C_t + \delta I)(\theta_{t+1} - \theta_t) - (\theta_{t+1} - \theta_t)\Big\|_2^2 \\
\leq~& 3\Big\|(1 - \rho_t)g_t - \nabla f(\theta_{t+1}) + \rho_t m_{t-1} + (\theta_{t+1} - \theta_t)\Big\|_2^2 \\
\quad+~& 3\Big\|\frac{1}{\alpha_t} (C_t + \delta I)(\theta_{t+1} - \theta_t)\Big\|_2^2 + 3\Big\|(\theta_{t+1} - \theta_t)\Big\|_2^2 \\
\leq~& 3\underbrace{\Big\|(1 - \rho_t)g_t - \nabla f(\theta_{t+1}) + \rho_t m_{t-1} + (\theta_{t+1} - \theta_t)\Big\|_2^2}_{T_1} + 3\Big(\frac{1}{\gamma^2} + 1\Big)\|\theta_{t+1} - \theta_t\|_2^2
\end{align*}
Here, we assume that
\begin{align*}
\lambda_{\mathrm{max}}\big(\frac{1}{\alpha_t} (C_t + \delta I)\big) \leq \frac{1}{\gamma} 
\end{align*}
which yields our Condition \ref{con:mineig}
\begin{align*}
\lambda_{\mathrm{min}}\big(\alpha_t (C_t + \delta I)^{-1}\big) \geq \gamma
\end{align*}
From \eqref{eqn:core}, we have
\begin{align*}
\big\langle (1 - \rho_t)g_t - \nabla f(\theta_t) + \rho_t m_{t-1}, \theta_{t+1} - \theta_t \big\rangle + \big\|\theta_{t+1} - \theta_t\|_{\frac{1}{2\alpha_t}(C_t + \delta I) - \frac{L}{2} I}^2 \leq F(\theta_t) - F(\theta_{t+1})
\end{align*}
which can be re-written as
\begin{align*}
& \Big\langle (1 - \rho_t) g_t - \nabla f(\theta_{t+1}) + \rho_t m_{t-1}, \theta_{t+1} - \theta_t \Big\rangle \\
\leq~& F(\theta_t) - F(\theta_{t+1}) - \big\langle \nabla f(\theta_{t+1}) - \nabla f(\theta_t), \theta_{t+1} - \theta_t \big\rangle - \big\|\theta_{t+1} - \theta_t\|_{\frac{1}{2\alpha_t}(C_t + \delta I) - \frac{L}{2} I}^2 \\
\leq~& F(\theta_t) - F(\theta_{t+1}) - \big\langle \nabla f(\theta_{t+1}) - \nabla f(\theta_t), \theta_{t+1} - \theta_t \big\rangle + \Big(\frac{\delta}{2\alpha_0} - \frac{L}{2}\Big)\|\theta_{t+1} - \theta_t\|_2^2
\end{align*}
since we have the condition $\frac{\delta}{2\alpha_0} \geq \frac{3}{2}L$. Therefore, we obtain
\begin{align*}
T_1 & = \|(1 - \rho_t)g_t - \nabla f(\theta_{t+1}) + \rho_t m_{t-1}\|_2^2 + \|\theta_{t+1} - \theta_t\|_2^2 \\
& \quad + 2\Big\langle (1 - \rho_t) g_t - \nabla f(\theta_{t+1}) + \rho_t m_{t-1}, \theta_{t+1} - \theta_t \Big\rangle \\
& \leq \|(1 - \rho_t)g_t - \nabla f(\theta_t) + \nabla f(\theta_t) - \nabla f(\theta_{t+1}) + \rho_t m_{t-1}\|_2^2 + \|\theta_{t+1} - \theta_t\|_2^2 \\
& \quad + F(\theta_t) - F(\theta_{t+1}) - \big\langle \nabla f(\theta_{t+1}) - \nabla f(\theta_t), \theta_{t+1} - \theta_t \big\rangle + \Big(\frac{\delta}{2\alpha_0} - \frac{L}{2}\Big)\|\theta_{t+1} - \theta_t\|^2 \\
& \leq 4\|g_t - \nabla f(\theta_t)\|_2^2 + 4L^2\|\theta_{t+1} - \theta_t\|_2^2 + 4\|\rho_t m_{t-1}\|_2^2 + 4\|\rho_t g_t \|_2^2 + \|\theta_{t+1} - \theta_t\|_2^2 \\
& \quad + F(\theta_t) - F(\theta_{t+1}) + L\|\theta_{t+1} - \theta_t\|_2^2 + \Big(\frac{\delta}{2\alpha_0} - \frac{L}{2}\Big)\big\|\theta_{t+1} - \theta_t\|_2^2 \\
& \leq F(\theta_t) - F(\theta_{t+1}) + 4\rho_0^2 \mu^{2(t-1)} G^2 + 4\rho_0^2 \mu^{2(t-1)}G^2 \\
& \quad + \Big(\frac{\delta}{2\alpha_0} + \frac{L}{2} + 1 + 4L^2\Big) \|\theta_{t+1} - \theta_t\|_2^2 + 4 \|g_t - \nabla f(\theta_t)\|_2^2 %\frac{2\rho_0 \lambda^{t-1} D^2}{\gamma L} + \frac{2\rho_0^2 \lambda^{2(t-1)}G^2}{L} + \Big(\frac{\delta}{2\alpha_0} - \frac{L}{2} + 2 + 4L^2\Big) \|\theta_{t+1} - \theta_t\|_2^2
\end{align*}
Therefore, we have the distance as
\begin{align*}
& \mathrm{dist}\big(\bm{0}, \widehat{\partial}F(\theta_{t+1})\big)^2 \\
\leq~& 3\Bigg(F(\theta_t) - F(\theta_{t+1}) + 8\rho_0^2 \mu^{2(t-1)} G^2 + \Big(\underbrace{\frac{\delta}{2\alpha_0} + \frac{L}{2} + 2 + 4L^2 + \frac{1}{\gamma^2}}_{C_2}\Big)\|\theta_{t+1} - \theta_t\|_2^2 + 4\|g_t - \nabla f(\theta_t)\|_2^2\Bigg) %3\Bigg(F(\theta_t) - F(\theta_{t+1}) + \frac{2\rho_0 \lambda^{t-1} D^2}{\gamma L} + \frac{2\rho_0^2 \lambda^{2(t-1)}G^2}{L} + \Big(\frac{\delta}{2\alpha_0} - \frac{L}{2} + 3 + 4L^2 + \frac{1}{\gamma^2}\Big) \|\theta_{t+1} - \theta_t\|_2^2\Bigg)
\end{align*}
Therefore, we have
\begin{align*}
\mathbb{E}[\mathrm{dist}\big(\bm{0}, \widehat{\partial}F(\theta_a)\big)^2] & \leq \frac{1}{T} \sum\limits_{t=0}^{T-1} \mathbb{E}\Big[\big\|(1 - \rho_t)g_t - \nabla f(\theta_{t+1}) + \rho_t m_{t-1} + \frac{1}{\alpha_t} (C_t + \delta I)(\theta_{t+1} - \theta_t)\big\|_2^2\Big] \\
& \leq \frac{3}{T} \Big(\Delta + \frac{8\rho_0^2 G^2}{1 - \mu^2} + 4\sum\limits_{t=0}^{T-1} \|g_t - \nabla f(\theta_t)\|_2^2 + C_2 \sum\limits_{t=0}^{T-1} \|\theta_{t+1} - \theta_t\|_2^2\Big) \\ %\frac{2\rho_0 D^2}{\gamma L (1 - \lambda)} + \frac{2\rho_0^2 G^2}{L (1 - \lambda^2)} + C_2 \sum\limits_{t=0}^{T-1} \|\theta_{t+1} - \theta_t\|_2^2 \Big) \\
& \leq \frac{3}{T}\Big(\Delta + \frac{8\rho_0^2 G^2}{1 - \mu^2} + 4\sum\limits_{t=0}^{T-1} \|g_t - \nabla f(\theta_t)\|_2^2 + C_2 (H_1 + H_2 \sum\limits_{t=0}^{T-1} \|g_t - \nabla f(\theta_t)\|_2^2\Big) \\
& \leq \frac{Q_1}{T} \sum\limits_{t=0}^{T-1} \mathbb{E}\big[\|g_t - \nabla f(\theta_t)\|_2^2\big] + \frac{Q_2\Delta}{T} + \frac{Q_3}{T}
\end{align*}
where
\begin{align*}
Q_1 = 4 + C_2 H_2, \quad Q_2 = 3 + \frac{3C_2}{\frac{\delta}{2\alpha_0} - \frac{3}{2}L}, \quad Q_3 = \frac{24\rho_0^2 G^2}{1 - \mu^2} + \frac{3 C_1 C_2}{\frac{\delta}{2\alpha_0} - \frac{3}{2}L} 
\end{align*}
Note that the constants $Q_1$, $Q_2$, and $Q_3$ depend on $\{\alpha_0, \delta, L, D, G, \rho_0, \mu, \gamma\}$, but not on $T$.
The third inequality comes from \eqref{eqn:distance_bound}. If we assume the stochastic gradient $g_t$ is evaluated on the minibatch $\mathcal{S}_t$ with $|\mathcal{S}_t| = b_t$, then we can obtain using Condition \ref{con:var}
\begin{align*}
	\|g_t - \nabla f(\theta_t)\|_2^2 & = \mathbb{E}\Big[\Big\|\Big(\frac{1}{b_t} \sum\limits_{i=1}^{b_t} \nabla f(\theta_t; \xi_{i_t})\Big) - \nabla f(\theta_t)]\Big\|_2^2\Big] \\
	& = \frac{1}{b_t^2} \mathbb{E}\Big[\Big\|\sum\limits_{i=1}^{b_t} \big\{\nabla f(\theta_t; \xi_{i_t}) - \nabla f(\theta_t)\big\}\Big\|_2^2\Big] \\
	& \leq \frac{1}{b_t^2} \sum\limits_{i=1}^{b_t} \mathbb{E}\big[\|\nabla f(\theta_t; \xi_{i_t}) - \nabla f(\theta_t)\|_2^2\big]\\ 
	& \leq \frac{1}{b_t} \sigma^2 
\end{align*}
where $i_t$ represents the random variable for each datapoint in minibatch samples $\mathcal{S}_t$.
Finally, we arrive at our Theorem \ref{thm:general_convergence} as
\begin{align*}
	\mathbb{E}_R[\mathrm{dist}\big(\bm{0}, \widehat{\partial}F(\theta_R)\big)^2] \leq \frac{Q_1 \sigma^2}{T} \sum\limits_{t=0}^{T-1} \frac{1}{b_t} + \frac{Q_2\Delta}{T} + \frac{Q_3}{T}
\end{align*}

\end{document}